\titlespacing\section{0pt}{0pt plus 1pt minus 1pt}{0pt plus 1pt minus 1pt}
\titlespacing\subsection{0pt}{0pt plus 1pt minus 1pt}{0pt plus 1pt minus 1pt}
\newlength{\bibitemsep}\setlength{\bibitemsep}{.2\baselineskip plus .05\baselineskip minus .05\baselineskip}
\newlength{\bibparskip}\setlength{\bibparskip}{0pt}
\let\oldthebibliography\thebibliography
\renewcommand\thebibliography[1]{%
  \oldthebibliography{#1}%
  \setlength{\parskip}{\bibitemsep}%
  \setlength{\itemsep}{\bibparskip}%
}
\newcommand\mynotes[1]{{#1}}
\newenvironment{sminipage}[2][t]{\minipage[t]{#2}\small}{\endminipage}
\newtheorem{proposition}{Proposition}
\newtheorem{lemma}{Lemma}
\title{Testing for Differences in Gaussian Graphical Models: Applications to Brain Connectivity}
\author{Eugene Belilovsky \textsuperscript{1,2,3},
  Gael Varoquaux\textsuperscript{2},
  Matthew Blaschko\textsuperscript{3}\\
  \textsuperscript{1}University of Paris-Saclay, \textsuperscript{2}INRIA, \textsuperscript{3}KU Leuven  \\
  \{\texttt{eugene.belilovsky}, \texttt{gael.varoquaux} \} \texttt{@inria.fr} \\ \texttt{matthew.blaschko@esat.kuleuven.be} 
}
\begin{document}

\maketitle
\begin{abstract} 
Functional brain networks are well described and estimated from data with Gaussian
Graphical Models (GGMs), e.g.\ using sparse inverse covariance
estimators. Comparing functional connectivity of subjects in two populations
calls for comparing these estimated GGMs.
Our goal is to identify differences in GGMs known to have similar structure. We characterize the
uncertainty of differences with confidence intervals obtained using a
parametric distribution on parameters of a sparse estimator. Sparse penalties 
enable statistical guarantees and interpretable models even in
high-dimensional and low-sample settings.
Characterizing the distributions of sparse models is inherently
challenging as the penalties produce a biased estimator. Recent work
invokes the sparsity assumptions to effectively remove the bias from a sparse estimator such as the lasso.  These distributions can be used to give confidence intervals on edges in GGMs, and by extension their differences. 
However, in the case of comparing GGMs, these estimators do not make use 
of any assumed joint structure among the GGMs. Inspired by priors from
brain functional connectivity we derive the
distribution of parameter differences under a joint penalty when
parameters are known to be sparse in the difference. This leads us to introduce the debiased multi-task fused lasso, whose distribution can be characterized in an efficient manner. We then 
show how the debiased lasso and multi-task fused lasso can be used to
obtain confidence intervals on edge differences in GGMs. We validate the techniques proposed on a set of synthetic examples as well as neuro-imaging dataset created for the study of autism.

\end{abstract} 

\section{Introduction}

Gaussian graphical models describe well interactions 
in many real-world systems. For instance, correlations in brain activity
reveal brain interactions between
distant regions, a process
know as \emph{functional connectivity}.
Functional connectivity is an interesting probe on brain mechanisms as it 
persists in the absence of tasks (the so-called ``resting-state'') and is thus applicable to study 
populations of impaired subjects, as in neurologic or psychiatric
diseases~\cite{castellanos2013clinical}. From a formal standpoint, Gaussian
graphical models are well suited to estimate brain connections from 
functional Magnetic Resonance Imaging (fMRI) signals~\cite{smith2011network,varoquaux2010brain}. A set of brain regions and
related functional connections is then called a functional
\emph{connectome}~\cite{varoquaux2013learning,castellanos2013clinical}. Its variation across subjects can capture cognition~\cite{richiardi2011decoding,shirer2012decoding}
or pathology~\cite{kelly2012characterizing,castellanos2013clinical}.
However, the effects of pathologies are often very small, as resting-state
fMRI is a weakly-constrained and noisy imaging modality, and the number
of subjects in a study is often small given the cost of imaging.
Statistical power is then a major concern~\cite{button2013power}.
The statistical challenge is to
increase the power to detect differences between Gaussian graphical
models in the small-sample regime.

In these settings, estimation and comparison of Gaussian graphical
models
fall in the range of high-dimensional statistics: the number of degrees
of freedom in the data is small compared to the dimensionality of the
model. In this regime, sparsity-promoting $\ell_1$-based penalties can
make estimation well-posed and recover good estimation performance
despite the scarcity of data~\cite{tibshirani1996regression,friedman2008sparse,meinshausen2006high,danaher2014joint,BhlmannBook}.
These encompass sparse regression methods such as the lasso or recovery
methods such as basis pursuit, and can be applied to estimation of
Gaussian graphical models with 
approaches such as the graphical lasso\cite{friedman2008sparse}. There is now a
wide body of literature which demonstrates the statistical properties of
these methods~\cite{BhlmannBook}. Crucial to applications in medicine or
neuroscience, recent work characterizes the uncertainty, with confidence
intervals and $p$-values, of the parameters selected by these
methods~\cite{jankova2014confidence,javanmard2014confidence,lockhart2014significance,g2013adaptive}.
These works focus primarily on the lasso and graphical lasso.   

Approaches to estimate statistical significance on sparse models
fall into several general categories: (a) non-parameteric sampling based methods which
are inherently expensive and have difficult limiting
distributions~\cite{BhlmannBook,narayan2015mixed,da2014randomized}, (b)
characterizations of the distribution of new parameters that enter a
model along a regularization
path~\cite{lockhart2014significance,g2013adaptive}, or (c) for a
particular regularization parameter, debiasing the solution to obtain a
new consistent estimator with known distribution
\cite{javanmard2014confidence,jankova2014confidence,van2014asymptotically}.
While some of the latter work has been used to characterize confidence intervals
on network edge selection, there is no result, to our knowledge, on the important problem of identifying differences in networks. Here the confidence on the result is even more critical, as the differences are the direct outcome used for neuroscience research or medical practice, and it is important to 
provide the practitioner a measure of the uncertainty. 

Here, we consider the setting of two
datasets known to have very similar underlying signals, but which
individually may not be very sparse. A motivating example is determining the difference in brain
networks of subjects from different groups: population analysis of
connectomes \cite{varoquaux2013learning,kelly2012characterizing}. Recent literature in
neuroscience~\cite{markov2013cortical} has suggested functional networks
are not sparse. On the other hand, differences in connections across subjects
should be sparse. Indeed the link between functional and
anatomical brain networks \cite{honey2009predicting}
suggests they should not differ drastically from one subject to another.
From a
neuroscientific standpoint we are interested in determining which edges
between two populations (e.g.\ autistic and non-autistic) are different.
Furthermore we want to provide confidence-intervals on our results. We particularly focus on the setting where one dataset is larger than the other. In many applications it is more difficult to collect one group (e.g. individuals with specific pathologies) than another.  

We introduce an estimator tailored to this goal: the debiased multi-task
fused lasso. We show that, when the underlying parameter
differences are indeed sparse, we can obtain a tractable Gaussian
distribution for the parameter difference. This closed-form distribution
underpins accurate hypothesis testing and confidence intervals. We
then use the relationship
between nodewise regression and the inverse covariance matrix to
apply
our estimator to learning differences of Gaussian graphical
models.

The paper is organized as follows.
In Section \ref{sec:Background} we review previous work on learning of GGMs and the debiased lasso. Section \ref{sec:debdiff} discusses a joint debiasing procedure that specifically debiases the difference estimator. In Section
\ref{sec:debiased} we introduce the debiased multi-task fused lasso and
show how it can be used to learn parameter differences in linear models. In Section
\ref{sec:hypoth}, we show how these results can be used for GGMs. In Section \ref{sec:experiments} we validate our approach on synthetic and fMRI data.

\section{Background and Related Work} \label{sec:Background}
\paragraph{Debiased Lasso}\label{sec:BackgroundDebiasedLasso}
A central starting point for our work is the debiased lasso ~\cite{van2014asymptotically,javanmard2014confidence}. Here one considers the linear regression model, $Y=\bm{X}\beta+\epsilon$, with  data matrix $\bm{X}$ and output $Y$, corrupted by $\epsilon \sim N(0,\sigma_\epsilon^2 I)$ noise.
The lasso estimator is formulated as follows:
\begin{equation}
\hat{\beta}^\lambda=\arg\min\limits_{\beta}\frac{1}{n}\|Y-\bm{X}\beta\|^2 + \lambda\|\beta\|_1
\end{equation}
The KKT conditions give $\hat{k}^\lambda=\frac{1}{n}\bm{X}^T(Y-\bm{X}\beta)$,
where $\hat{k}$ is the subgradient of $\lambda\|\beta\|_1$. The debiased lasso estimator~\cite{van2014asymptotically,javanmard2014confidence} is then formulated as
$\hat{\beta}_{u}^\lambda=\hat{\beta}^\lambda+\bm{M}\hat{k}^\lambda
$
~for some $\bm{M}$ that is constructed to give guarantees on the
asymptotic distribution of $\hat{\beta}_{u}^\lambda$. Note that this estimator is not strictly unbiased in the finite sample case, but has a bias that rapidly approaches zero (w.r.t.\ $n$) if $\bm{M}$ is chosen appropriately, the true regressor $\beta$ is indeed sparse, and the design matrix satistifes a certain restricted eigenvalue property~\cite{van2014asymptotically,javanmard2014confidence}. We decompose the difference of this debiased estimator and the truth as follows:
\begin{equation}
\hat{\beta}_{u}^\lambda -\beta= \frac{1}{n}\bm{M}\bm{X}^T\epsilon -(\bm{M}\hat{\Sigma}-I)(\hat{\beta}-\beta)
\end{equation}
The first term is Gaussian and the second term is responsible for the bias. Using Holder's inequality the second term can be bounded by $\|\bm{M}\hat{\Sigma}-I\|_{\infty}\|\hat{\beta}-\beta\|_1$. The first part of which we can bound using an appropriate selection of $\bm{M}$ while the second part is bounded by our implicit sparsity assumptions coming from lasso theory \citep{BhlmannBook}. Two approaches from the recent literature discuss how one can select $\bm{M}$ to appropriately debias this estimate. In \cite{van2014asymptotically} it suffices to use nodewise regression to learn an inverse covariance matrix which guarantees constraints on $\|\bm{M}\hat{\Sigma}-I\|_{\infty}$. A second approach by \cite{javanmard2014confidence} proposes to solve a quadratic program to directly minimize the variance of the debiased estimator while constraining $\|\bm{M}\hat{\Sigma}-I\|_{\infty}$ to induce sufficiently small bias. 

Intuitively the construction of  $\hat{\beta}_{u}^\lambda$ allows us to trade variance and bias via the $\bm{M}$ matrix. This allows us to overcome a naive bias-variance tradeoff by leveraging the sparsity assumptions that bound $\|\hat{\beta}-\beta\|_1$. In the sequel we expand this idea to the case of debiased parameter difference estimates and sparsity assumptions on the parameter differences.

In the context of GGMs, the debiased lasso can gives us an estimator that asymptotically converges to the partial correlations. As highlighted by \cite{waldorptesting} we can thus use the debiased lasso to obtain difference estimators with known distributions. This allows us to obtain confidence intervals on edge differences between Gaussian graphical models. We discuss this further in the sequel.   
\paragraph{Gaussian Graphical Model Structure Learning}

A standard approach to estimating Gaussian graphical models in high dimensions is to assume sparsity of the precision matrix and have a constraint which limits the number of non-zero entries of the precision matrix. 
This constraint can be achieved with a $\ell_1$-norm
regularizer as in the popular graphical lasso \cite{friedman2008sparse}.
Many variants of this approach that incorporate further structural assumptions have been proposed~\cite{honorio2010multi,danaher2014joint,mohan2012structured}.

An alternative solution to inducing sparsity on the precision matrix indirectly is neighborhood $\ell_1$ regression from~\cite{meinshausen2006high}. Here the authors make use of a long known property that connects the entries of the precision matrix to the problem of regression of one variable on all the others~\cite{marsaglia1964conditional}. This property is critical to our proposed estimation as it allows relating regression models to finding edges connected to specific nodes in the GGM. 


GGMs have been found to be good at recovering the main brain networks from fMRI
data \cite{smith2011network,varoquaux2010brain}. Yet, recent work in
neuroscience has showed that the structural wiring of the brain did not
correspond to a very sparse network~\cite{markov2013cortical}, thus
questioning the underlying assumption of sparsity often used to estimate
brain network connectivity. On the other hand, for the problem of finding
differences between networks in two populations, sparsity may be a valid
assumption.
It is well known that
anatomical brain connections tend to closely follow functional
ones~\cite{honey2009predicting}.
Since anatomical networks do not differ drastically we can surmise that
two brain networks should not differ much even in the presence of
pathologies. The statistical method we present here leverages
sparsity in the difference of
two networks, to yield well-behaved estimation and hypothesis testing
in the low-sample regime. Most closely related to our work, \mynotes{\cite{zhao2014direct,fazayeli2016generalized}} recently consider a different approach to estimating difference networks, but does not consider assigning significance to the detection of edges.

\section{Debiased Difference Estimation}\label{sec:debdiff}
In many applications one may be interested in learning multiple linear models from data that share many parameters. Situations such as this arise often in neuroimaging and bioinformatics applications.
We can often improve the learning procedure of such models by incorporating fused penalties that penalize the $\| \cdot \|_1$ norm of the parameter differences or $\| \cdot \|_{1,2}$ which encourages groups of parameters to shrink together. These methods have been shown to substantially improve the learning of the joint models. However, the differences between model parameters, which can have a high sample complexity when there are few of them, are often pointed out only in passing~\cite{chen2010graph,danaher2014joint,honorio2010multi}.
On the other hand, in many situations we might be interested in actually understanding and identifying the differences between elements of the support. For example when considering brain networks of patients suffering from a pathology and healthy control subjects, the difference in brain connectivity may be of great interest. Here we focus specifically on accurately identifying differences with significance.

We consider the case of two tasks (e.g.\ two
groups of subjects), but the analysis can be easily extended to general multi-task settings. Consider the problem setting of data matrices $\bm{X_1}$ and $\bm{X_2}$, which are $n_1 \times p$ and $n_2 \times p$, respectively. We model them as producing outputs $Y_1$ and $Y_2$, corrupted by diagonal gaussian noise $\epsilon_1$ and $\epsilon_2$ as follows  
\begin{gather}\label{eq:two_model}
Y_1=\bm{X_1}\beta_1+\epsilon_1,~~ Y_2=\bm{X_2}\beta_2+\epsilon_2
\end{gather}
Let $S_1$ and $S_2$ index the elements of the support of $\beta_1$ and $\beta_2$, respectively. Furthermore the support of $\beta_1-\beta_2$ is indexed by $S_d$ and finally the union of $S_1$ and $S_2$ is denoted $S_a$. Using a squared loss estimator producing independent estimates $\hat{\beta}_1,\hat{\beta}_2$ we can obtain a difference estimate $\hat{\beta}_d=\hat{\beta}_1-\hat{\beta}_2$. In general if $S_d$ is very small relative to $S_a$ then we will have a difficult time to identify the support $S_d$. This can be seen if we consider each of the individual components of the prediction errors. The larger the true support $S_a$ the more it will drown out the subset which corresponds to the difference support. This can be true even if one uses $\ell_1$ regularizers over the parameter vectors. Consequently, one cannot rely on the straightforward strategy of learning two independent estimates  and taking their difference. The problem is particularly pronounced in the common setting where one group has fewer samples than the other. Thus here we consider the setting where $n_1>n_2$ and possibly $n_1 \gg n_2$. 

Let $\hat{\beta}_1$ and $\hat{\beta}_2$ be regularized least squares estimates. In our problem setting we wish to obtain confidence intervals on debiased versions of the difference $\hat{\beta}_d=\hat{\beta}_1-\hat{\beta}_2$ in a high-dimensional setting (in the sense that $n_2<p$), we aim to leverage assumptions about the form of the true $\beta_d$, primarily that it is sparse, while the independent $\hat{\beta}_1$ and $\hat{\beta}_2$ are weakly sparse or not sparse. We consider a general case of a joint regularized least squares estimation of $\hat{\beta}_1$ and $\hat{\beta}_2$ 
\begin{equation}
\min\limits_{\beta_1,\beta_2}\frac{1}{n_1}\|Y_1-\bm{X_1}\beta_1\|^2 + \frac{1}{n_2}\|Y_2-\bm{X_2}\beta_2\|^2+R(\beta_1,\beta_2)
\end{equation}
We note that the differentiating and using the KKT conditions gives 
\mynotes{
\begin{equation}\label{eq:subg_fused}
\hat{k}^\lambda=
\begin{bmatrix} 
\hat{k}_1 \\ \hat{k}_2
\end{bmatrix}
=\begin{bmatrix}
\frac{1}{n_1}\bm{X_1}^T(Y-\bm{X_1}\beta_1)\\
\frac{1}{n_2}\bm{X_2}^T(Y-\bm{X_2}\beta_2 )
\end{bmatrix}
\end{equation}
}
where $\hat{k}^\lambda$ is the (sub)gradient of $R(\beta_1,\beta_2)$. Substituting Equation \eqref{eq:two_model} we can now write
\begin{gather}
\hat{\Sigma}_1(\hat{\beta}_1-\beta_1)+\hat{k}_1=\frac{1}{n_1}\bm{X_1}^T\epsilon_1~~~\text{and}~~~
\hat{\Sigma}_2(\hat{\beta}_2-\beta_2)+\hat{k}_2=\frac{1}{n_2}\bm{X_2}^T\epsilon_2
\end{gather}
We would like to solve for the difference $\hat{\beta}_1-\hat{\beta}_2$ but the covariance matrices may not be invertible. We introduce matrices $\bm{M}_1$ and $\bm{M}_2$, which will allow us to isolate the relevant term. We will see that in addition these matrices will allow us to decouple the bias and variance of the estimators. 
\begin{gather}
\bm{M}_1\hat{\Sigma}_1(\hat{\beta}_1-\beta_1)+\bm{M}_1\hat{k}_1=\frac{1}{n_1}\bm{M}_1\bm{X_1}^T\epsilon_1 ~~~\text{and}~~~
\bm{M}_2\hat{\Sigma}_2(\hat{\beta}_2-\beta_2)+\bm{M}_2\hat{k}_2=\frac{1}{n_2}\bm{M}_2\bm{X_2}^T\epsilon_2
\end{gather}
subtracting these and rearranging we can now isolate the difference estimator plus a term we add back controlled by $\bm{M}_1$ and $\bm{M}_2$
\begin{gather}\label{eq:debiasedFused}
(\hat{\beta}_1-\hat{\beta}_2)-(\beta_1-\beta_2)+\bm{M}_1\hat{k}_1-\bm{M}_2\hat{k}_2
=\frac{1}{n_1}\bm{M}_1\bm{X_1}^T\epsilon_1-\frac{1}{n_2}\bm{M}_2\bm{X_2}^T\epsilon_2 -\Delta\\
\Delta=(\bm{M}_1\hat{\Sigma}_1-I)(\hat{\beta}_1-\beta_1)-(\bm{M}_2\hat{\Sigma}_2-I)(\hat{\beta}_2-\beta_2)
\end{gather}
Denoting $\beta_d:=\beta_1-\beta_2$ and $\beta_a:=\beta_1+\beta_2$, we can reformulate $\Delta$: 
\begin{align}
\Delta=\frac{(\bm{M}_1\hat{\Sigma}_1-I+\bm{M}_2\hat{\Sigma}_2-I)}{2}(\hat{\beta}_{d}-\beta_d)
+\frac{(\bm{M}_1\hat{\Sigma}_1-\bm{M}_2\hat{\Sigma}_2)}{2}(\hat{\beta}_{a}-\beta_a)
\end{align}
Here, $\Delta$  will control the bias of our estimator. Additionally, we
want to minimize its variance, 
\begin{equation}\label{eqn:var_fused}
\frac{1}{n_1}\bm{M}_1\hat{\Sigma}_1\bm{M}_1\hat{\sigma}_1^2+\frac{1}{n_2}\bm{M}_2\hat{\Sigma}_2\bm{M}_2\hat{\sigma}_2^2 .
\end{equation} 
We can now overcome the limitations of simple bias variance trade-off by using an appropriate regularizer coupled with an assumption on the underlying signal $\beta_1$ and $\beta_2$. This will in turn make $\Delta$ asymptotically vanish while maximizing the variance. 

Since we are interested in pointwise estimates, we can focus on bounding the infinity norm of $\Delta$. 
\begin{gather}
\|\Delta\|_{\infty}\leq \frac{1}{2}\underbrace{\|\bm{M}_1\hat{\Sigma}_1+\bm{M}_2\hat{\Sigma}_2-2I\|_{\infty}}_{\mu_1}
\underbrace{\|\hat{\beta}_{d}-\beta_d\|_1}_{l_d}+\frac{1}{2}\underbrace{\|\bm{M}_1\hat{\Sigma}_1-\bm{M}_2\hat{\Sigma}_2\|_{\infty}}_{\mu_2}\underbrace{\|\hat{\beta}_a-\beta_a\|_1}_{l_a}
\end{gather}
We can control the maximum bias by selecting $\bm{M}_1$ and $\bm{M}_2$ appropriately.  If we use an appropriate regularizer coupled with sparsity assumptions we can bound the terms $l_a$ and $l_d$ and use this knowledge to appropriately select $\bm{M}_1$ and $\bm{M}_2$ such that the bias becomes neglibile. If we had only the independent parameter sparsity assumption we can apply the results of the debiased lasso and estimate $\bm{M}_1$ and $\bm{M}_2$ independently as in \cite{javanmard2014confidence}. In the case of interest where $\beta_1$ and $\beta_2$ share many weights we can do better by taking this as an assumption and applying a sparsity regularization on the difference by adding the term $\lambda_2\|\beta_1-\beta_2\|_1$. Comparing the decoupled penalty to the fused penalty proposed we see that $l_d$ would decrease at a given sample size. We now show how to jointly estimate $\bm{M}_1$ and $\bm{M}_2$ so that $\|\Delta\|_{\infty}$ becomes negligible for a given $n$, $p$ and sparsity assumption.

\subsection{Debiasing the Multi-Task Fused Lasso}\label{sec:debiased}
Motivated by the inductive hypothesis from neuroscience described above we introduce a consistent low-variance estimator, the debiased multi-task fused lasso. We propose to use the following regularizer $R(\beta_1,\beta_2)=\lambda_1\|\beta_1\|_1+\lambda_1\|\beta_2\|_1+\lambda_2\|\beta_1-\beta_2\|_1$. This penalty has been referred to in some literature as the multi-task fused lasso \cite{chen2010graph}. We propose to then debias this estimate as shown in \eqref{eq:debiasedFused}. We estimate the $\bm{M}_1$ and $\bm{M}_2$ matrices by solving the following QP for each row $m_1$ and $m_2$ of the matrices $\bm{M}_1$ and $\bm{M}_2$. 
\begin{align}\label{eqn:QP_fused}
\min\limits_{m_1,m_2} &\frac{1}{n_1}m_1^T\hat{\Sigma}_1m_1+\frac{1}{n_2}m_2^T\hat{\Sigma}_2m_2
\\\nonumber
s.t.~~~ \|\bm{M}_1\hat{\Sigma}_1+\bm{M}_2&\hat{\Sigma}_2-2I\|_{\infty}\leq \mu_1,  ~~~
	\|\bm{M}_1\hat{\Sigma}_1-\bm{M}_2\hat{\Sigma}_2\|_{\infty}\leq \mu_2
\end{align} 
This directly minimizes the variance, while bounding the bias in the constraint. We now show how to set the bounds:
%
%
\begin{proposition}
 Take $\lambda_1>2\sqrt{\frac{\log  p}{n_2}}$ and $\lambda_2=O(\lambda_1)$. Denote $s_d$ the difference sparsity, $s_{1,2}$ the parameter sparsity $|S_1|+|S_2|$, $c>1$,$a>1$, and $0<m\ll1$. When the compatibility condition \citep{BhlmannBook,ganguly2014local} holds the following bounds gives $l_au_2=o(1)$ and  $l_du_1=o(1)$ and thus $\|\Delta\|_{\infty}=o(1)$ with high probability.

\begin{equation}\label{bound:u1u2}
\mu_1 \leq \frac{1}{c\lambda_2 s_d n_2^{m}} ~~\text{and}~~
\mu_2 \leq \frac{1}{a(\lambda_1 s_{1,2}+\lambda_2 s_{d}) n_2^{m}}
\end{equation}
\end{proposition}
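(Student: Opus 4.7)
The plan is to combine the deterministic bound $\|\Delta\|_\infty \leq \tfrac{1}{2}\mu_1 l_d + \tfrac{1}{2}\mu_2 l_a$ (already derived above) with high-probability $\ell_1$-oracle bounds on $\hat{\beta}_d - \beta_d$ and $\hat{\beta}_a - \beta_a$ for the multi-task fused lasso under the compatibility condition. The choice $\lambda_1 > 2\sqrt{\log p / n_2}$ is designed to dominate the noise, and $\lambda_2 = O(\lambda_1)$ keeps the fused penalty on the same scale so that the analysis of the usual lasso basic inequality carries over.

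First, I would fix the noise-control event. With $\epsilon_i \sim N(0,\sigma_i^2 I)$, a standard Gaussian tail bound combined with a union bound over the $p$ coordinates shows that $\|\tfrac{1}{n_i}\bm{X}_i^\top \epsilon_i\|_\infty \leq \lambda_1/2$ simultaneously for $i = 1,2$ with probability at least $1 - c'/p$, using $n_1 \geq n_2$. This is the event on which all subsequent bounds hold and which justifies the ``with high probability'' qualifier. On this event, starting from the basic inequality of the multi-task fused lasso (comparing the penalized objective at $(\hat{\beta}_1,\hat{\beta}_2)$ and at $(\beta_1,\beta_2)$ and absorbing the noise term via the chosen $\lambda_1$), then splitting each $\ell_1$ norm according to the supports $S_1$, $S_2$, and $S_d$ and invoking the fused-penalty compatibility condition of \cite{BhlmannBook,ganguly2014local}, I would obtain simultaneously
\[
\|\hat{\beta}_a - \beta_a\|_1 = O(\lambda_1 s_{1,2} + \lambda_2 s_d), \qquad \|\hat{\beta}_d - \beta_d\|_1 = O(\lambda_2 s_d).
\]

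Finally, plugging these bounds together with the constraints in \eqref{bound:u1u2} gives $l_d \mu_1 = O(\lambda_2 s_d) \cdot \tfrac{1}{c \lambda_2 s_d n_2^m} = O(n_2^{-m})$ and $l_a \mu_2 = O(\lambda_1 s_{1,2} + \lambda_2 s_d) \cdot \tfrac{1}{a(\lambda_1 s_{1,2} + \lambda_2 s_d) n_2^m} = O(n_2^{-m})$, both $o(1)$ since $m > 0$; the conclusion $\|\Delta\|_\infty = o(1)$ then follows from the deterministic inequality.

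The main obstacle I expect is producing the sharp $\ell_1$ bound for $\hat{\beta}_d - \beta_d$ at the rate $\lambda_2 s_d$ rather than at the looser joint rate $\lambda_1(s_{1,2} + s_d)$, which is what one would get from a naive single-task argument applied to each regression separately. This requires a fused-lasso-specific compatibility constant that controls the error in the $\beta_d$ direction independently of the error in $\beta_a$, respecting the cone structure jointly induced by the three $\ell_1$ terms in $R(\beta_1,\beta_2)$. The cited \cite{ganguly2014local} handles precisely this kind of local/grouped compatibility for fused penalties, so the plan is to invoke that machinery rather than rederive it from scratch.
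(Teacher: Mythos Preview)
Your proposal is correct and follows essentially the same architecture as the paper's proof: establish the noise-control event so that $\lambda_1$ dominates $\|\bm{X}_i^\top\epsilon_i/n_i\|_\infty$, derive a basic/cone inequality for the multi-task fused lasso, convert it via compatibility into the $\ell_1$ bounds $l_a=O(\lambda_1 s_{1,2}+\lambda_2 s_d)$ and $l_d=O(\lambda_2 s_d)$, and then multiply by the prescribed $\mu_1,\mu_2$ to get $O(n_2^{-m})=o(1)$.

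The one place where your route diverges slightly from the paper is exactly the obstacle you flagged, the sharp rate for $l_d$. You propose to obtain $\|\hat{\beta}_d-\beta_d\|_1=O(\lambda_2 s_d)$ by invoking a fused-lasso-specific compatibility constant from \cite{ganguly2014local} that decouples the $\beta_d$ direction from $\beta_a$. The paper instead works from the same cone inequality it used for $l_a$, but then argues that in the regime $n_1\gg n_2$ with $\lambda_2=O(\lambda_1)$ the residual joint-support terms $3\lambda_1\|\hat{\beta}_{S_{1,2}}-\beta_{S_{1,2}}\|_1/2-\lambda_1\|\hat{\beta}_{S_{1,2}^c}\|_1/2$ can be absorbed into $c\lambda_2\|\hat{\beta}_{d,S_d}-\beta_{d,S_d}\|_1$, after which the standard compatibility-plus-complete-the-square trick yields $l_d\leq c^2\lambda_2 s_d/\phi_{\min}^2$. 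Your plan is arguably cleaner (it isolates the structural assumption in a single compatibility constant rather than a regime-specific absorption step), while the paper's version makes explicit where the assumption $n_1\gg n_2$ is actually used.
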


The proof is given in the supplementary material. Using the prescribed $\bm{M}$s obtained with \eqref{eqn:QP_fused} and \ref{bound:u1u2} we obtain an unbiased estimator given by \eqref{eq:debiasedFused} with variance \eqref{eqn:var_fused}

\subsection{GGM Difference Structure Discovery with Significance}\label{sec:hypoth}
The debiased lasso and the debiased multi-task fused lasso, proposed in the previous section, can be used to learn the structure of a difference of Gaussian graphical models and to provide significance results on the presence of edges within the difference graph. We refer to these two procedures as Difference of Neighborhoods Debiased Lasso Selection and  Difference of Neighborhoods Debiased Fused Lasso Selection.

We recall that the conditional independence properties of a GGM are given by the zeros of the precision matrix and these zeros correspond to the zeros of regression parameters when regressing one variable on all the other. By obtaining a debiased lasso estimate for each node in the graph \cite{waldorptesting} notes this leads to a sparse unbiased precision matrix estimate with a known asymptotic distribution. Subtracting these estimates for two different datasets gives us a difference estimate whose zeros correspond to no difference of graph edges in two GGMs. We can similarly use the debiased multi-task fused lasso described above and the joint debiasing procedure to obtain a test statistic for the difference of networks. We now formalize this procedure.

\paragraph{Notation} Given GGMs $j=1,2$. Let $\bm{X}_j$ denote the random variable in $\mathbb{R}^p$ associated with GGM $j$. We denote $X_{j,v}$ the random variable associated with a node, $v$ of the GGM and $\bm{X}_{j,v^{c}}$ all other nodes in the graph. We denote $\hat{\beta}_{j,v}$ the lasso or multi-task fused lasso estimate of $\bm{X}_{j,v^c}$ onto $X_{j,v}$, then  $\hat{\beta}_{j,dL,v}$ is the debiased version of $\hat{\beta}_{j,v}$. Finally let $\beta_{j,v}$ denote the unknown regression, $X_{j,v}=\bm{X}_{j,v^c}\beta_{j,v}+\epsilon_j$ where $\epsilon_j \sim \mathbf{N}(0,\sigma_{j}\mathbf{I})$.  
Define $\beta_{D,v}^{i}=\hat{\beta}_{1,dL,v}^{i}-\hat{\beta}_{2,dL,v}^{i}$ the test statistic associated with the edge $v,i$ in the difference of GGMs $j=1,2$.
\begin{proposition}
Given the  $\hat{\beta}_{D,v}^{i}$, $\bm{M}_1$ and $\bm{M}_2$ computed as in \cite{javanmard2014confidence} for the debiased lasso or as in Section \ref{sec:debiased} for the debiased multi-task fused lasso.  When the respective assumptions of these estimators are satisfied the following holds w.h.p.
\begin{gather}
\hat{\beta}_{D,v}^{i} - \beta_{D,v}^{i} = W + o(1)~~\text{where}~~
W\sim \mathbf{N}(0,[\sigma_1^2\bm{M}_1\hat{\Sigma}_1\bm{M}_1^T+\sigma_2^2\bm{M}_2\hat{\Sigma_2}\bm{M}_2^T]_{i,i})
\end{gather}
\vspace{-11pt}
\end{proposition}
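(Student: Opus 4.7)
The proof is essentially an application of the decomposition already established in equation \eqref{eq:debiasedFused}, specialized to the nodewise regression setup of Section \ref{sec:hypoth}. The plan is to fix a node $v$ in the two GGMs and apply the debiased difference machinery to the two regressions $X_{j,v}=\bm{X}_{j,v^c}\beta_{j,v}+\epsilon_j$ for $j=1,2$, treating $\bm{X}_{j,v^c}$ as the design matrix $\bm{X}_j$ and $X_{j,v}$ as the response $Y_j$. By the conditioning property of the multivariate Gaussian, the residual $\epsilon_j$ is Gaussian with variance $\sigma_j^2$ and is independent of $\bm{X}_{j,v^c}$, so the linear model assumption underlying \eqref{eq:two_model} is satisfied. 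Since the samples from GGM 1 and GGM 2 are independent, $\epsilon_1$ and $\epsilon_2$ are mutually independent as well.

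Next I would substitute into the key identity \eqref{eq:debiasedFused}, which reads, after absorbing the correction $\bm{M}_j \hat{k}_j$ into the definition of the debiased estimator $\hat{\beta}_{j,dL,v} = \hat{\beta}_{j,v} + \bm{M}_j \hat{k}_j$,
\begin{equation*}
\hat{\beta}_{D,v} - \beta_{D,v} = \underbrace{\tfrac{1}{n_1}\bm{M}_1\bm{X}_{1,v^c}^T\epsilon_1 - \tfrac{1}{n_2}\bm{M}_2\bm{X}_{2,v^c}^T\epsilon_2}_{=:W} \;-\; \Delta,
\end{equation*}
where $\Delta$ is exactly the bias term appearing just below \eqref{eq:debiasedFused}. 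Conditionally on the designs, $W$ is a sum of two independent centered Gaussians, so $W\sim \mathbf{N}(0,\,\tfrac{\sigma_1^2}{n_1}\bm{M}_1\hat{\Sigma}_1\bm{M}_1^T + \tfrac{\sigma_2^2}{n_2}\bm{M}_2\hat{\Sigma}_2\bm{M}_2^T)$, matching \eqref{eqn:var_fused}. Extracting the $(i,i)$-coordinate then gives the stated marginal variance for the scalar test statistic $\hat{\beta}_{D,v}^{i}-\beta_{D,v}^{i}$.

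It then remains to verify that the $i$-th coordinate of $\Delta$ is $o(1)$ with high probability under either set of assumptions. For the debiased fused lasso this follows verbatim from Proposition 1: the bounds \eqref{bound:u1u2} on $\mu_1,\mu_2$ together with the compatibility-condition control on $l_a$ and $l_d$ give $\|\Delta\|_\infty = o(1)$, which dominates the absolute value of the $i$-th entry. For the independent debiased lasso case, the same decomposition collapses (since $\bm{M}_1,\bm{M}_2$ are chosen independently) to two separate bias terms of the form $(\bm{M}_j\hat{\Sigma}_j-I)(\hat{\beta}_{j,v}-\beta_{j,v})$, each of which is $o(1)$ in $\ell_\infty$ by the standard results of \cite{javanmard2014confidence,van2014asymptotically} under sparsity of the individual $\beta_{j,v}$; the triangle inequality gives the same conclusion for $\Delta$.

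The main technical obstacle, which is handled by invoking Proposition 1 as a black box, is controlling the bias term $\Delta$ uniformly over coordinates in the fused lasso regime, because both the decoupled sparsity $s_{1,2}$ and the difference sparsity $s_d$ enter the bound and one must balance $\mu_1,\mu_2$ against $l_a,l_d$ so that both products vanish simultaneously. Once Proposition 1 is granted, the remaining work is purely assembling the Gaussian term and invoking independence across the two populations, so the proof reduces to a few lines.
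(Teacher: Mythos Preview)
Your proposal is correct and follows essentially the same approach as the paper, which dispatches the proposition in a single line by noting that it ``follows directly from the asymptotic consistency of each individual $\hat{\beta}_{j,dL,v}^{i}$ for the debiased lasso and multi-task fused lasso.'' You have simply spelled out that one-line remark in full (and, in doing so, correctly restored the $1/n_j$ factors in the variance from \eqref{eqn:var_fused} that are dropped in the paper's statement of the proposition).
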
\leavevmode
%
%
%
This follows directly from the asymptotic consistency of each individual $\hat{\beta}_{j,dL,v}^{i}$ for the debiased lasso and multi-task fused lasso. 

We can now define the the null hypothesis of interest as 
$H_0 : \bm{\Theta}_{1,(i,j)}=\bm{\Theta}_{2,(i,j)}$. Obtaining a test statistic for each element $\beta_{D,v}^i$ allows us to perform hypothesis testing on individual edges, all the edges, or groups of edges (controlling for the FWER). We summarize the Neighbourhood Debiased Lasso Selection process in Algorithm \ref{alg:lasso} and the Neighbourhood Debiased Multi-Task Fused Lasso Selection in Algorithm \ref{alg:fused} which can be used to obtain a matrix of all the relevant test statistics. 

\setlength{\intextsep}{0pt}
\begin{figure*}
\begin{sminipage}{0.49\textwidth}
\begin{algorithm}[H]
\footnotesize
\caption{\footnotesize Difference Network Selection with Neighborhood Debiased Lasso}
\label{alg:lasso}
\begin{algorithmic} 
\STATE $V=\{1,...,P\}$
\STATE NxP Data Matrices, $\bm{X}_{1}$ and $\bm{X}_{2}$
\STATE Px(P-1) Output Matrix $\bm{B}$ of test statistics
\FOR{$v \in V$}
	\STATE Estimate unbiased $\hat{\sigma}_1,\hat{\sigma}_2$ from $X_{1,v},X_{2,v}$
    \FOR{$j \in \{1,2\}$}
    		\STATE $\beta_j \leftarrow SolveLasso(\bm{X}_{j,v^c},X_{j,v})$
    		\STATE $\bm{M}_j \leftarrow MEstimator(\bm{X}_{j,v^c})$
    		\STATE $\beta_{j,U} \leftarrow \beta_j$+$\bm{M}_{j}\bm{X}_{j,v^c}^T(X_{j,v}-\bm{X}_{j,v^c}\beta_j$)
    \ENDFOR
    \STATE $\sigma_{d}^2 \leftarrow diag(\frac{\hat{\sigma}_1^2}{n_1}\bm{M}_1^T\hat{\Sigma}_1\bm{M}_1+\frac{\hat{\sigma}_2^2}{n_2}\bm{M}_2^T\hat{\Sigma}_2\bm{M}_2)$
    \FOR{$j \in v^c$}  
   		\STATE $\bm{B}_{v,j}=(\beta_{1,U,j}-\beta_{2,U,j})/\sqrt{\sigma_{d,j}^2}$
    \ENDFOR
\ENDFOR
\end{algorithmic}
\end{algorithm}
\end{sminipage}
\begin{sminipage}{0.50\textwidth}
\begin{algorithm}[H]
\footnotesize
\caption{\footnotesize Difference Network Selection with Neighborhood Debiased Fused Lasso}
\label{alg:fused}
\begin{algorithmic} 
\STATE $V=\{1,...,P\}$
\STATE NxP Data Matrices, $\bm{X}_{1}$ and $\bm{X}_{2}$
\STATE Px(P-1) Output Matrix $\bm{B}$ of test statistics

\FOR{$v \in V$}
	\STATE Estimate unbiased $\hat{\sigma}_1,\hat{\sigma}_2$ from $X_{1,v},X_{2,v}$
    \STATE $\beta_1$,$\beta_2 \leftarrow FusedLasso(\bm{X}_{1,v^c},X_{1,v},\bm{X}_{2,v^c},X_{2,v})$
    \STATE $\bm{M}_{1},\bm{M}_{2} \leftarrow MEstimator(\bm{X}_{1,v^c},\bm{X}_{2,v^c})$
    \FOR{$j \in \{1,2\}$}       
        \STATE $\beta_{j,U} \leftarrow \beta_j$+$\bm{M}_{j}\bm{X}_{j,v^c}^T(X_{j,v}-\bm{X}_{j,v^c}\beta_j$)
    \ENDFOR
    \STATE $\sigma_{d}^2\leftarrow diag(\frac{\hat{\sigma}_1^2}{n_1}\bm{M}_1^T\hat{\Sigma}_1\bm{M}_1+\frac{\hat{\sigma}_2^2}{n_2}\bm{M}_2^T\hat{\Sigma}_2\bm{M}_2)$
    
   \FOR{$j \in v^c$}  
   	\STATE $\bm{B}_{v,j}=(\beta_{1,U,j}-\beta_{2,U,j})/\sqrt{\sigma_{d,j}^2}$
   \ENDFOR
\ENDFOR
\end{algorithmic}
\end{algorithm}
\end{sminipage}
\end{figure*}

\section{Experiments}\label{sec:experiments}



\subsection{Simulations}
We generate synthetic data based on two Gaussian graphical models with $75$ vertices. Each of the individual graphs have a sparsity of $19\%$ and their difference sparsity is $3\%$. We construct the models by taking two identical precision matrices and randomly removing some edges from both. We generate synthetic data using both precision matrices. We use $n_1=800$ samples for the first dataset and vary the second dataset $n_2={20,30,...150}$.

We perform a regression using the debiased lasso and the debiased multi-task fused lasso on each node of the graphs. As an extra baseline we consider the projected ridge method from the R package ``hdi''~\cite{dezeure2014high}. We use the debiased lasso of~\cite{javanmard2014confidence} where we set $\lambda=k\hat{\sigma}\sqrt{\log p/n}$. We select $c$ by 3-fold cross validation $k=\{0.1,..100\}$  and $\bm{M}$ as prescribed in~\cite{javanmard2014confidence} which we obtain by solving a quadratic program. $\hat{\sigma}$ is an unbiased estimator of the noise variance. For the debiased lasso we let both $\lambda_1=k_1\hat{\sigma_2}\sqrt{\log p/n_2}$ and $\lambda_2=k_2\hat{\sigma_2}\sqrt{\log p/n_2}$, and select based on 3-fold cross-validation from the same range as $k$. $\bm{M}_1$ and $\bm{M}_2$ are obtained as in Equation~\eqref{eqn:QP_fused} with the bounds \eqref{bound:u1u2} being set with $c=a=2$, $s_d=2,s_{1,2}=15,m=0.01$, and the cross validated $\lambda_1$ and $\lambda_2$. In both debiased lasso and fused multi-task lasso cases we utilize the Mosek QP solver package to obtain $\bm{M}$.
For the projected ridge method we use the hdi package to obtain two estimates of $\beta_1$ and $\beta_2$ along with their upper bounded biases which are then used to obtain $p$-values for the difference. 

We report the false positive rate, the power, the coverage and interval
length as per~\cite{van2014asymptotically}  for the difference of graphs.
In these experiments we aggregate statistics to demonstrate power of the
test statistic, as such we consider each edge as a separate test and do not perform corrections. 
Table \ref{tab:n260} gives the numerical results for $n_2=60$: the power
and coverage is substantially better for the debiased fused multi-task
lasso, while at the same time the confidence interval smaller.

Figure~\ref{fig:power_n2} shows the power of the test for different
values of $n_2$. The fusedlasso outperforms the other methods substantially. Projected ridge regression is particularly weak, in this scenario, as it uses a worst case p-value obtained using an estimate of an upper bound on the bias~\cite{dezeure2014high}. 

\begin{figure}
\begin{minipage}{\textwidth}
\begin{minipage}[T]{0.49\textwidth}
\centering
\includegraphics[scale=0.40]{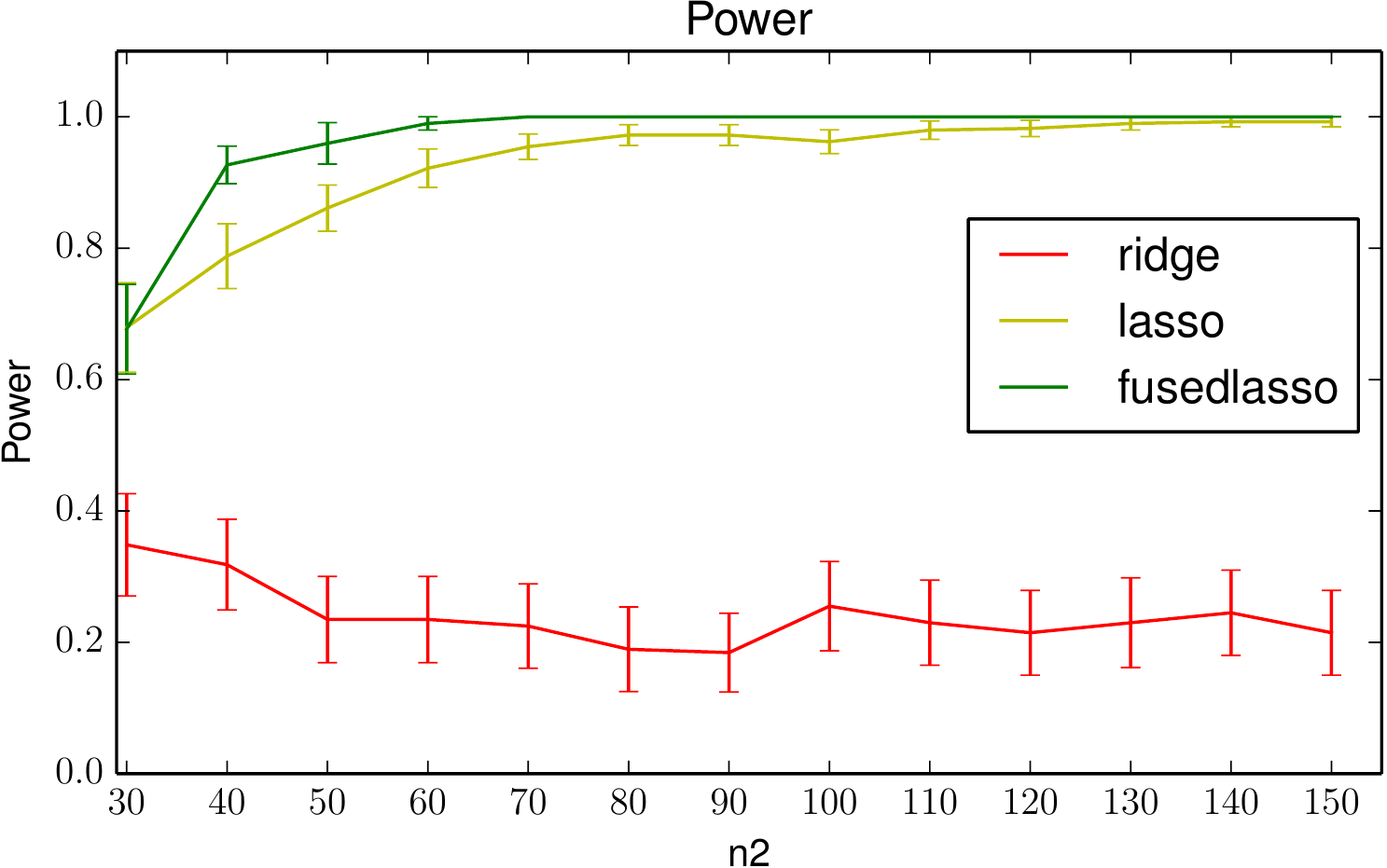}
\captionof{figure}{
Power of the test
for different number of samples in the second simulation,
with $n_1 = 800$.
The debiased fused lasso has highest statistical power.}
\label{fig:power_n2}
\end{minipage}
\hfill
\begin{minipage}[T]{0.48\textwidth}
\centering
\scalebox{0.55}{
\begin{tabular}{ l | c |c|c|c|c| r }
 Method & FP & TP(Power) & Cov $S$  & Cov $S_d^c$ &len $S$& len $S_d^c$\\  \hline		
 Deb. Lasso & $3.7\%$ & $80.6\%$ & $96.2\%$ &$92\%$  &$2.199$& $2.195$ \\
 Deb. Fused Lasso & $0.0\%$ & $93.3\%$& $100\%$ &$98.6\%$ &$2.191$&$2.041$ \\
  Ridge Projection &$0.0\%$ &$18.6\%$&$100\%$ &$100\%$&$5.544$&$5.544$\\
\end{tabular}
}
\captionof{table}{Comparison of Debiased Lasso, Debiased Fused Lasso, and Projected Ridge Regression for edge selection in difference of GGM. The significance level is $5\%$, $n_1=800$ and $n_2=60$. All methods have false positive below the significance level and the debiased fused lasso dominates in terms of power. The coverage of the difference support and non-difference support is also best for the debiased fused lasso, which simultaneously has smaller confidence intervals on average. }
\label{tab:n260}

\end{minipage}
\end{minipage}
\end{figure}

\subsection{Autism Dataset}

Correlations in brain activity measured via fMRI reveal functional
interactions between remote brain regions
\cite{lindquist2008statistical}. 
In population analysis, they are used to measure how connectivity 
varies between different groups. 
Such analysis of brain function is particularly important in psychiatric
diseases, that have no known anatomical support: the brain functions in a
pathological aspect, but nothing abnormal is clearly visible in the brain
tissues.
Autism spectrum disorder is a typical example of such ill-understood
psychiatric disease. Resting-state fMRI is accumulated in an effort
to shed light on this diseases mechanisms: comparing the connectivity 
of autism patients versus control subjects. 
The ABIDE (Autism Brain Imaging Data Exchange) dataset \cite{martino2014autism}
gathers rest-fMRI from 1\,112 subjects across, with 539
individuals suffering from autism spectrum disorder and 573 typical controls. 
We use the preprocessed and curated data\footnote{\url{http://preprocessed-connectomes-project.github.io/abide/}}.

In a connectome analysis
\cite{varoquaux2013learning,richiardi2011decoding}, each subject is
described by a GGM measuring functional connectivity
between a set of regions.
We build a connectome from brain regions of interest based on a multi-subject atlas\footnote{\url{https://team.inria.fr/parietal/research/spatial_patterns/spatial-patterns-in-resting-state/}} of \mynotes{39} functional regions derived from resting-state fMRI \cite{varoquaux2011multi} (\mynotes{see.
Fig.\;\ref{fig:msdl_atlas}}).

We are interested in determining edge differences between the autism
group and the control group. We use this data to show how our parametric
hypothesis test can be used to determine differences in brain networks.
Since no ground truth exists for this problem, we use permutation testing
to evaluate the statistical procedures
\cite{nichols2002nonparametric,da2014randomized}. Here we permute the two
conditions (e.g. autism and control group) to compute a p-value and
compare it to our test statistics. This provides us with a finite sample
strict control on the error rate: a non-parametric validation of our parametric test. 

For our experiments we take 2000 randomly chosen volumes from the control group subjects and 100 volumes from the autism group subjects. We perform permutation testing using the de-biased lasso, de-biased multi-task fused lasso, and projected ridge regression. Parameters for the de-biased fused lasso are chosen as in the previous section. For the de-biased lasso we use the exact settings for $\lambda$ and constraints on $M$ provided in the experimental section of \cite{javanmard2014confidence}. Projected ridge regression is evaluated as in the previous section. 

Figure \ref{fig:Permutation} shows a comparison of three
parametric approaches versus their analogue obtained with a permutation
test. The chart plots the permutation p-values of each entry in the $38
\times 39$ $\bm{B}$ matrix against the expected parametric p-value. For
all the methods the points are above the line indicating the tests are
not breaching the expected false positive rates. However the de-biased
lasso and ridge projecting are very conservative and lead to few
detections. The de-biased multi-task fused lasso yields far more detections on the same dataset, within the expected false positive rate or near it. 

\begin{figure}
\centering
\includegraphics[scale=0.21]{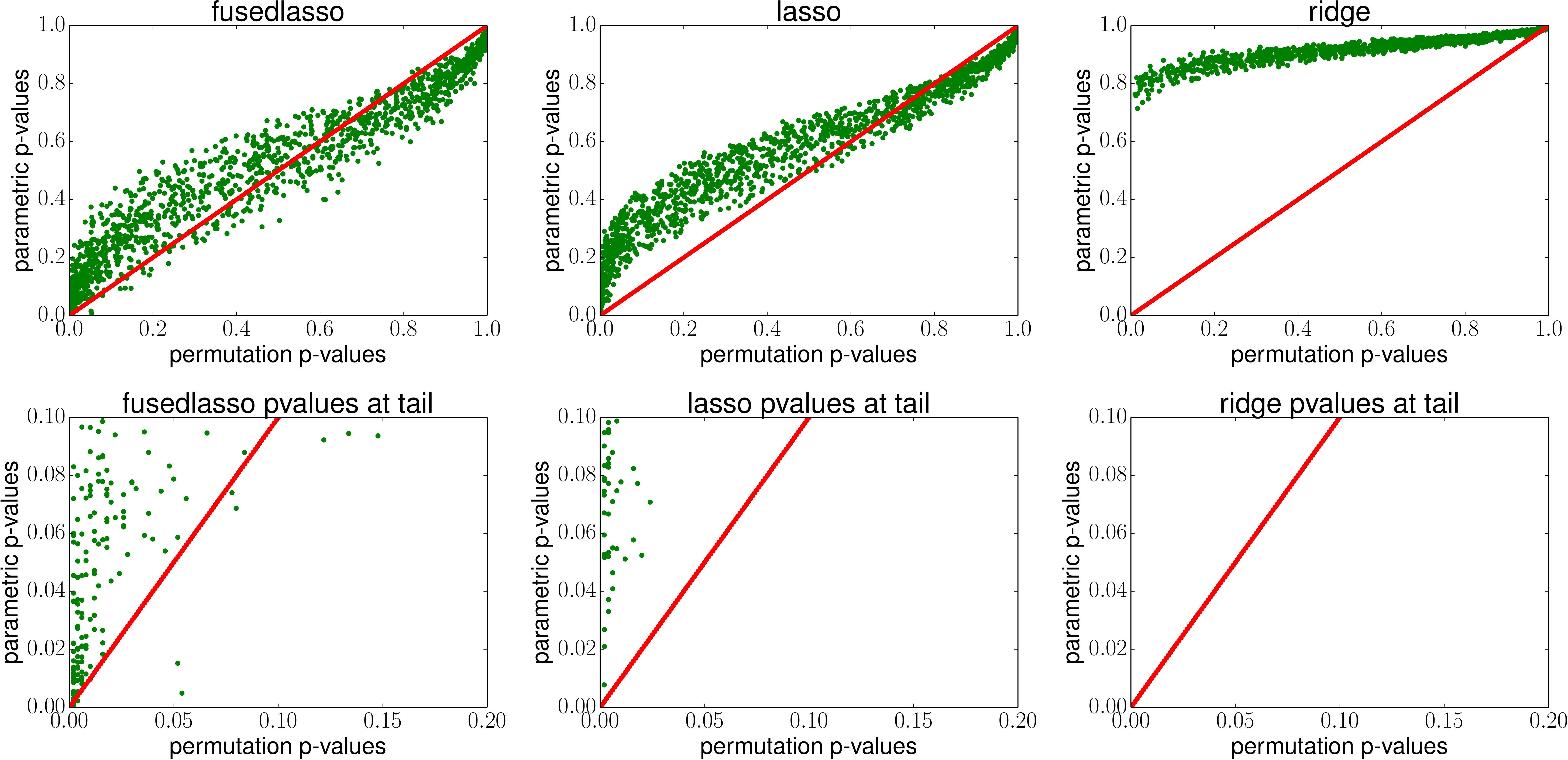}
\caption{Permutation testing comparing debiased fused lasso, debiased
lasso, and projected ridge regression on the ABIDE dataset. The chart plots the permutation p-values of each method on each possible edge against the expected
parametric p-value. The debiased lasso and ridge projection are very
conservative and lead to few detections. The fused lasso yields far more detections on the same dataset, almost all within the expected false positive rate.}
\label{fig:Permutation}
\end{figure}
We now analyse the reproducibility of the results by repeatedly sampling
100 subsets of the data (with the same proportions $n_1=2000$ and
$n_2=100$), obtaining the matrix of test statistics, selecting edges that
fall below the $5\%$ significance level.
Figure \ref{fig:Reproducibility_U} shows how often edges are selected
multiple times across subsamples. We report results with a threshold on
uncorrected p-values as the 
lasso procedure selects no edges with
multiple comparison correction (supplementary 
materials give 
FDR-corrected 
results for the de-biased fused multi-task lasso selection).
Figure \ref{fig:connectome_fused} shows a connectome of the edges
frequently selected by the de-biased fused multi-task lasso (with FDR
correction).

\setlength{\belowcaptionskip}{0pt}
\begin{figure}
\begin{minipage}[b]{0.49\textwidth}
\includegraphics[scale=0.1,width=\columnwidth]{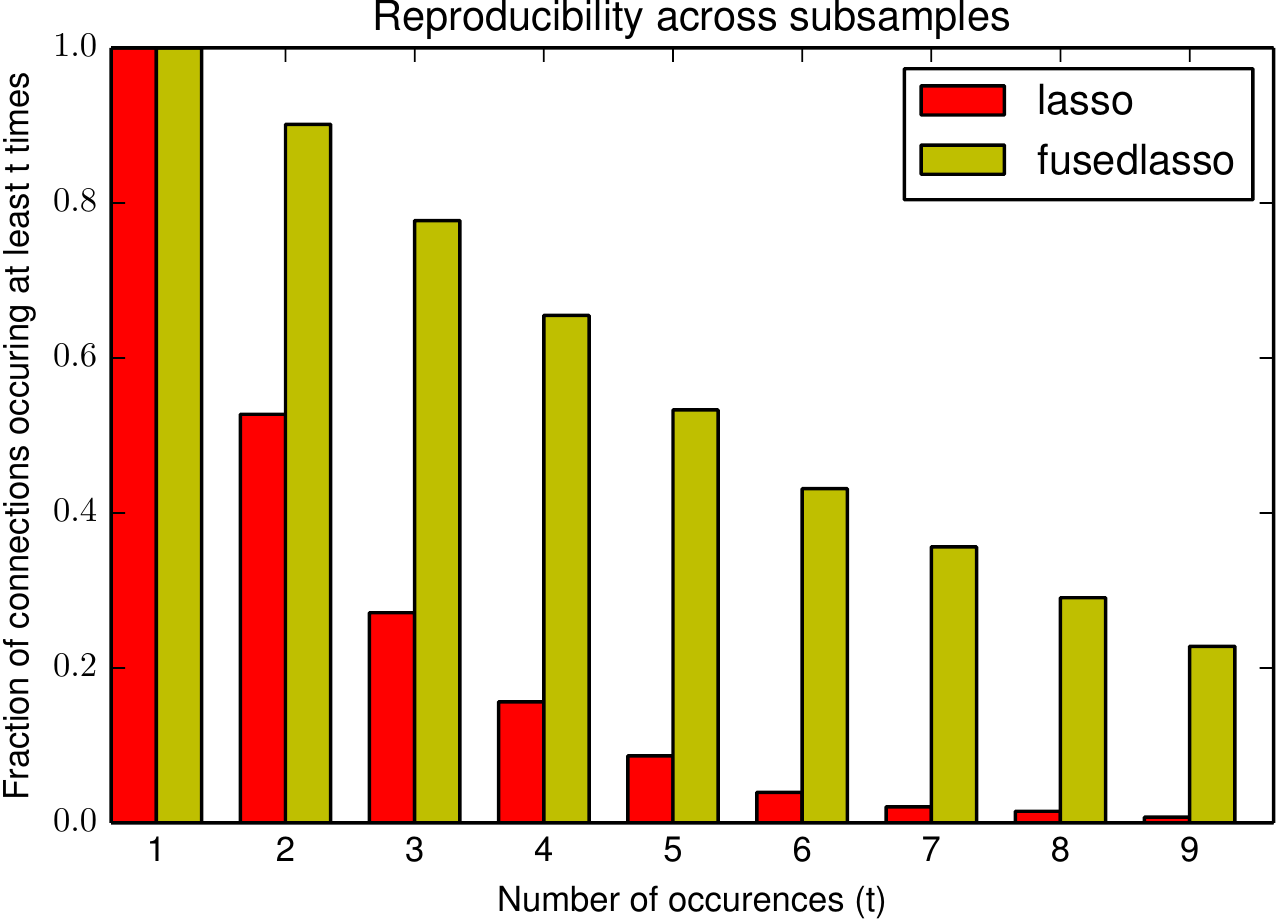}
\captionof{figure}{Reproducibility of results from sub-sampling using
uncorrected error rate. The
fused lasso is much more likely to detect edges and produce stable
results. Using corrected p-values no detections are made by lasso (figure
in supplementary material).}
\label{fig:Reproducibility_U}
\end{minipage}
\hfill%
\begin{minipage}[b]{0.49\textwidth}
\begin{center}
\includegraphics[scale=0.42]{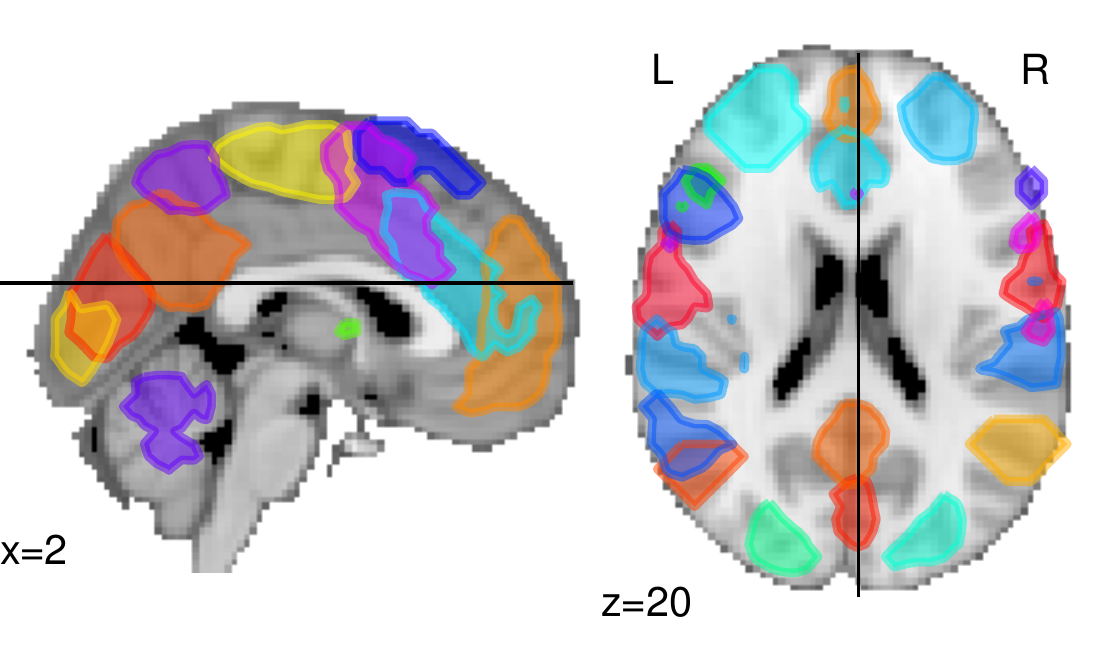}%

\caption{Outlines of the regions of the MSDL atlas.}
\label{fig:msdl_atlas}
\includegraphics[scale=0.25]{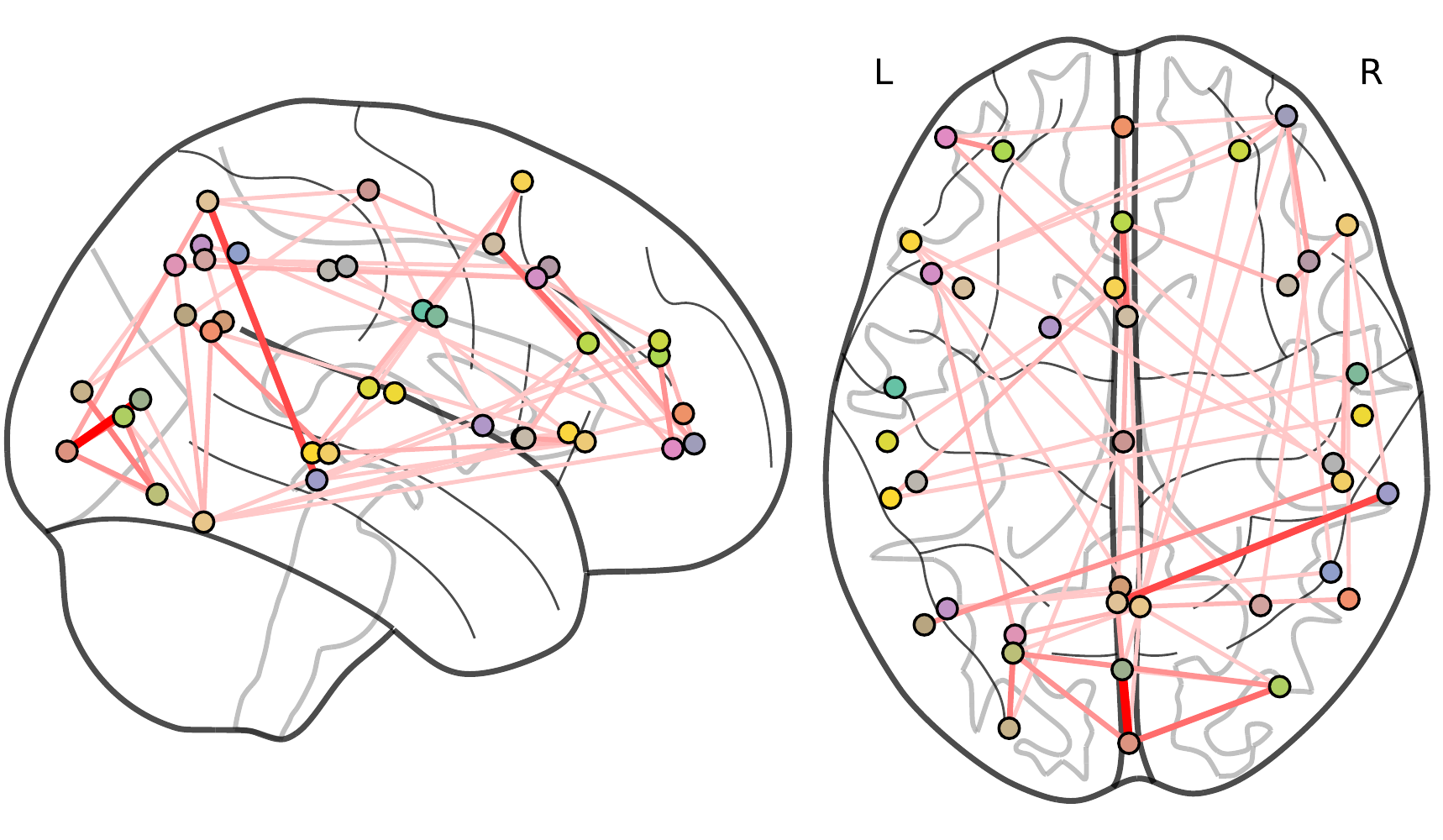}%
\caption{
Connectome of repeatedly picked up edges in 100 trials. We only show edges
selected more than once. Darker red indicates more frequent
selection.}
\label{fig:connectome_fused}
\end{center}
\end{minipage}
\end{figure}



\section{Conclusions}

We have shown how to characterize the distribution of differences of sparse
estimators and how to use this distribution for confidence intervals and
p-values on GGM network differences. For this purpose, 
we have introduced the de-biased multi-task fused lasso. We have demonstrated
on synthetic and real data that this approach can provide accurate p-values and a sizable increase of statistical power compared to standard
procedures. The settings match those of population
analysis for functional brain connectivity, and the gain in statistical power is direly needed to tackle the low sample sizes
\cite{button2013power}.

Future work calls for expanding the analysis to cases with more than two
groups as well as considering a $\ell_{1,2}$ penalty sometimes used at
the group level~\cite{varoquaux2010brain}. Additionally the squared loss
objective optimizes excessively the prediction and could be
modified to lower further the sample complexity in terms of parameter
estimation. 

\titlespacing\subsection{0pt}{0pt}{0.1pt}
\subsection*{Acknowledgements}
This work is partially funded by Internal Funds KU Leuven, ERC Grant 259112, FP7-MC-CIG 334380, and DIGITEO 2013-0788D - SOPRANO, and ANR-11-BINF-0004 NiConnect. We thank Wacha Bounliphone and Puneet Dokania for helpful discussions
\small
\titlespacing\section{0pt}{0pt}{0.1pt}
\bibliography{eugene}

\begin{thebibliography}{10}

\bibitem{BhlmannBook}
P.~B\"{u}hlmann and S.~van~de Geer.
\newblock {\em Statistics for High-Dimensional Data}.
\newblock Springer, 2011.

\bibitem{button2013power}
K.~Button et~al.
\newblock Power failure: Why small sample size undermines the reliability of
  neuroscience.
\newblock {\em Nature Reviews Neuroscience}, 14:365, 2013.

\bibitem{castellanos2013clinical}
F.~X. Castellanos~et al.
\newblock Clinical applications of the functional connectome.
\newblock {\em Neuroimage}, 80:527, 2013.

\bibitem{chen2010graph}
X.~Chen et~al.
\newblock Smoothing proximal gradient method for general structured sparse
  learning.
\newblock In {\em UAI}, 2011.

\bibitem{da2014randomized}
B.~Da~Mota et~al.
\newblock Randomized parcellation based inference.
\newblock {\em NeuroImage}, 89:203--215, 2014.

\bibitem{danaher2014joint}
P.~Danaher, P.~Wang, and D.~Witten.
\newblock The joint graphical lasso for inverse covariance estimation across
  multiple classes.
\newblock {\em Journal of the Royal Statistical Society (B)}, 76(2):373--397,
  2014.

\bibitem{dezeure2014high}
R.~Dezeure, P.~B\"{u}hlmann, L.~Meier, and N.~Meinshausen.
\newblock High-dimensional inference: Confidence intervals, $p$-values and
  {R}-software hdi.
\newblock {\em Statist. Sci.}, 30(4):533--558, 11 2015.

\bibitem{martino2014autism}
A.~Di~Martino et~al.
\newblock The autism brain imaging data exchange: Towards a large-scale
  evaluation of the intrinsic brain architecture in autism.
\newblock {\em Mol. Psychiatry}, 19:659, 2014.

\bibitem{fazayeli2016generalized}
F.~Fazayeli and A.~Banerjee.
\newblock Generalized direct change estimation in ising model structure.
\newblock In {\em ICML}, 2016.

\bibitem{friedman2008sparse}
J.~Friedman, T.~Hastie, and R.~Tibshirani.
\newblock Sparse inverse covariance estimation with the graphical lasso.
\newblock {\em Biostatistics}, 9(3):432--441, 2008.

\bibitem{ganguly2014local}
A.~Ganguly and W.~Polonik.
\newblock Local neighborhood fusion in locally constant {Gaussian} graphical
  models.
\newblock {\em arXiv:1410.8766}, 2014.

\bibitem{g2013adaptive}
M.~G. G'Sell, J.~Taylor, and R.~Tibshirani.
\newblock Adaptive testing for the graphical lasso.
\newblock {\em arXiv:1307.4765}, 2013.

\bibitem{honey2009predicting}
C.~Honey, O.~Sporns, L.~Cammoun, X.~Gigandet, et~al.
\newblock Predicting human resting-state functional connectivity from
  structural connectivity.
\newblock {\em Proc. Nat. Acad. Sciences}, 106:2035, 2009.

\bibitem{honorio2010multi}
J.~Honorio and D.~Samaras.
\newblock Multi-task learning of {Gaussian} graphical models.
\newblock In {\em ICML}, 2010.

\bibitem{jankova2014confidence}
J.~Jankov\'{a} and S.~van~de Geer.
\newblock Confidence intervals for high-dimensional inverse covariance
  estimation.
\newblock {\em Electron. J. Statist.}, 9(1):1205--1229, 2015.

\bibitem{javanmard2014confidence}
A.~Javanmard and A.~Montanari.
\newblock Confidence intervals and hypothesis testing for high-dimensional
  regression.
\newblock {\em The Journal of Machine Learning Research}, 15(1):2869--2909,
  2014.

\bibitem{kelly2012characterizing}
C.~Kelly, B.~B. Biswal, R.~C. Craddock, F.~X. Castellanos, and M.~P. Milham.
\newblock Characterizing variation in the functional connectome: Promise and
  pitfalls.
\newblock {\em Trends in Cog.\ Sci.}, 16:181, 2012.

\bibitem{lindquist2008statistical}
M.~A. Lindquist et~al.
\newblock The statistical analysis of {fMRI} data.
\newblock {\em Stat.\ Sci.}, 23(4):439--464, 2008.

\bibitem{lockhart2014significance}
R.~Lockhart et~al.
\newblock A significance test for the lasso.
\newblock {\em Ann. Stat.}, 42:413, 2014.

\bibitem{markov2013cortical}
N.~T. Markov, M.~Ercsey-Ravasz, D.~C. Van~Essen, K.~Knoblauch, Z.~Toroczkai,
  and H.~Kennedy.
\newblock Cortical high-density counterstream architectures.
\newblock {\em Science}, 342(6158):1238406, 2013.

\bibitem{marsaglia1964conditional}
G.~Marsaglia.
\newblock Conditional means and covariances of normal variables with singular
  covariance matrix.
\newblock {\em Journal of the American Statistical Association},
  59(308):1203--1204, 1964.

\bibitem{meinshausen2006high}
N.~Meinshausen and P.~B{\"u}hlmann.
\newblock High-dimensional graphs and variable selection with the lasso.
\newblock {\em Ann. Stat.}, pages 1436--1462, 2006.

\bibitem{mohan2012structured}
K.~Mohan et~al.
\newblock Structured learning of {Gaussian} graphical models.
\newblock In {\em NIPS}, pages 620--628, 2012.

\bibitem{narayan2015mixed}
M.~Narayan and G.~I. Allen.
\newblock Mixed effects models to find differences in multi-subject functional
  connectivity.
\newblock {\em bioRxiv:027516}, 2015.

\bibitem{nichols2002nonparametric}
T.~E. Nichols and A.~P. Holmes.
\newblock Nonparametric permutation tests for functional neuroimaging: A primer
  with examples.
\newblock {\em Human Brain Mapping}, 15(1):1--25, 2002.

\bibitem{richiardi2011decoding}
J.~Richiardi, H.~Eryilmaz, S.~Schwartz, P.~Vuilleumier, and D.~Van De~Ville.
\newblock Decoding brain states from {fMRI} connectivity graphs.
\newblock {\em NeuroImage}, 56:616--626, 2011.

\bibitem{shirer2012decoding}
W.~R. Shirer, S.~Ryali, E.~Rykhlevskaia, V.~Menon, and M.~D. Greicius.
\newblock Decoding subject-driven cognitive states with whole-brain
  connectivity patterns.
\newblock {\em Cerebral Cortex}, 22(1):158--165, 2012.

\bibitem{smith2011network}
S.~M. Smith et~al.
\newblock Network modelling methods for {fMRI}.
\newblock {\em NeuroImage}, 54:875, 2011.

\bibitem{tibshirani1996regression}
R.~Tibshirani.
\newblock Regression shrinkage and selection via the lasso.
\newblock {\em Journal of the Royal Statistical Society. Series B}, pages
  267--288, 1996.

\bibitem{van2014asymptotically}
S.~Van~de Geer, P.~B{\"u}hlmann, Y.~Ritov, and R.~Dezeure.
\newblock On asymptotically optimal confidence regions and tests for
  high-dimensional models.
\newblock {\em Ann. Stat.}, 42(3):1166--1202, 2014.

\bibitem{varoquaux2013learning}
G.~Varoquaux and R.~C. Craddock.
\newblock Learning and comparing functional connectomes across subjects.
\newblock {\em NeuroImage}, 80:405--415, 2013.

\bibitem{varoquaux2011multi}
G.~Varoquaux, A.~Gramfort, F.~Pedregosa, V.~Michel, and B.~Thirion.
\newblock Multi-subject dictionary learning to segment an atlas of brain
  spontaneous activity.
\newblock In {\em IPMI}, 2011.

\bibitem{varoquaux2010brain}
G.~Varoquaux, A.~Gramfort, J.-B. Poline, and B.~Thirion.
\newblock Brain covariance selection: Better individual functional connectivity
  models using population prior.
\newblock In {\em NIPS}, 2010.

\bibitem{waldorptesting}
L.~Waldorp.
\newblock Testing for graph differences using the desparsified lasso in
  high-dimensional data.
\newblock {\em Statistics Survey}, 2014.

\bibitem{zhao2014direct}
S.~D. Zhao et~al.
\newblock Direct estimation of differential networks.
\newblock {\em Biometrika}, 101(2):253--268, 2014.

\end{thebibliography}
\bibliographystyle{abbrv}
\clearpage 
\appendix
\section{Analysis of Debiased Multi-Task Fused Lasso}

The following analysis is used to show the conditions under which the debias multi-task fused lasso achieves a negligible bias.

Let $\beta = [\beta_1;\beta_2]$,$\beta_d=\beta_1-\beta_2$,$\beta_a=\beta_1+\beta_2$. Let $S_{1,2}$ bet the support of $\beta$. Define $\bm{X}_N= [\bm{X}_1/\sqrt{n_1},0;0,\bm{X}_2/\sqrt{n_2}]$ 

\begin{lemma} (Basic Inequality)
$\|\bm{X}_N(\hat{\beta}-\beta)\|_2^2+\lambda_1\|\hat{\beta}\|_1+\lambda_2\|\hat{\beta}_d\|\leq 2\epsilon^T\bm{X}_N(\hat{\beta}-\beta)+\lambda_1\|\beta\|_1+\lambda_2\|\beta_d\|_1$
\end{lemma}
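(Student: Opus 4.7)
The plan is to recognize this as the standard ``basic inequality'' pattern used throughout high-dimensional statistics, now adapted to the stacked formulation suggested by the notation $\bm{X}_N$ and $\beta = [\beta_1;\beta_2]$. The key observation is that the multi-task fused lasso objective can be rewritten in a single concatenated least-squares form:
\begin{equation*}
\hat{\beta} = \arg\min_{\beta}\ \|Y_N - \bm{X}_N\beta\|_2^2 + \lambda_1\|\beta\|_1 + \lambda_2\|\beta_d\|_1,
\end{equation*}
where $Y_N = [Y_1/\sqrt{n_1};\,Y_2/\sqrt{n_2}]$ and $Y_N = \bm{X}_N\beta + \epsilon_N$ with $\epsilon_N = [\epsilon_1/\sqrt{n_1};\,\epsilon_2/\sqrt{n_2}]$. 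Once the problem is in this form, the inequality is essentially the classical lasso basic inequality, with the two $\ell_1$ penalties $\lambda_1\|\cdot\|_1$ and $\lambda_2\|\cdot_d\|_1$ simply carried through.

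From here the proof is a two-step calculation. First, I would use the optimality of $\hat{\beta}$ for the penalized objective to write
\begin{equation*}
\|Y_N - \bm{X}_N\hat{\beta}\|_2^2 + \lambda_1\|\hat{\beta}\|_1 + \lambda_2\|\hat{\beta}_d\|_1 \ \leq\ \|Y_N - \bm{X}_N\beta\|_2^2 + \lambda_1\|\beta\|_1 + \lambda_2\|\beta_d\|_1,
\end{equation*}
which holds simply by plugging the population parameter $\beta$ into the objective. Second, I would substitute $Y_N = \bm{X}_N\beta + \epsilon_N$ into both squared norms and expand. The right-hand side collapses to $\|\epsilon_N\|_2^2 + \lambda_1\|\beta\|_1 + \lambda_2\|\beta_d\|_1$, while the left-hand side expands as $\|\bm{X}_N(\hat{\beta}-\beta)\|_2^2 - 2\epsilon_N^T \bm{X}_N(\hat{\beta}-\beta) + \|\epsilon_N\|_2^2 + \lambda_1\|\hat{\beta}\|_1 + \lambda_2\|\hat{\beta}_d\|_1$. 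Cancelling $\|\epsilon_N\|_2^2$ from both sides and moving the cross term over yields exactly the claimed inequality.

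There is essentially no obstacle: the argument is a textbook manipulation once one commits to the stacked parameterization. The only small thing to verify is that the penalty term $\lambda_2\|\beta_d\|_1 = \lambda_2\|\beta_1-\beta_2\|_1$ is inert in this algebra because it does not interact with the quadratic loss expansion — it just rides along on both sides, exactly as $\lambda_1\|\beta\|_1$ does. I would also note, for clarity, that the inner product $\epsilon_N^T \bm{X}_N(\hat{\beta}-\beta)$ in the stacked form equals $\frac{1}{n_1}\epsilon_1^T\bm{X}_1(\hat{\beta}_1-\beta_1) + \frac{1}{n_2}\epsilon_2^T\bm{X}_2(\hat{\beta}_2-\beta_2)$, which is the noise term one would get by working block-by-block, confirming the equivalence of the two viewpoints. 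With this, the lemma is established in a few lines and sets up the standard compatibility-condition/cone-condition arguments that follow for bounding $\|\hat{\beta}-\beta\|_1$ and $\|\hat{\beta}_d - \beta_d\|_1$ needed in Proposition~1.
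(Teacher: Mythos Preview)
Your proposal is correct and is exactly the approach the paper takes: the paper simply states that the inequality ``follows from the fact that $\hat{\beta}$ is the minimizer of the fused lasso objective,'' and your expansion---comparing the objective at $\hat{\beta}$ versus at the true $\beta$, substituting $Y_N=\bm{X}_N\beta+\epsilon_N$, and cancelling $\|\epsilon_N\|_2^2$---is precisely the standard computation underlying that one-line justification. Your remark that $\epsilon_N^T\bm{X}_N(\hat{\beta}-\beta)=\frac{1}{n_1}\epsilon_1^T\bm{X}_1(\hat{\beta}_1-\beta_1)+\frac{1}{n_2}\epsilon_2^T\bm{X}_2(\hat{\beta}_2-\beta_2)$ also matches how the paper immediately unpacks the empirical process term right after the lemma.
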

This follows from the fact that $\hat{\beta}$ is the minimizer of the fused lasso objective.

The term, $\epsilon^T\bm{X}_N(\hat{\beta}-\beta)$, commonly known as the empirical process term \cite{BhlmannBook} can be bound as follows:
\begin{gather*}
2|\epsilon^T\bm{X}_N(\hat{\beta}-\beta)|=
2|\epsilon_1^T\bm{X}_1(\hat{\beta_1}-\beta_1)/n_1+\epsilon_2^T\bm{X}_2(\hat{\beta_2}-\beta_2)/n_2|\leq \\\nonumber
2\|\hat{\beta_1}-\beta_1\|_1\max\limits_{1\leq j \leq p}|\epsilon_1^T\bm{X}_1^{(j)}/n_1| 
+2\|\hat{\beta}_2-\beta_2\|_1\max\limits_{1\leq j \leq p}|\epsilon_2^T\bm{X}_2^{(j)}/n_2|
\end{gather*}
Where we utilize holder's inequality in the last line. 
We define the random event $\mathcal{F}$ for which the following holds:
$\max\limits_{1\leq j \leq p}|\epsilon_1^T\bm{X}_1^{(j)}/n_1|\leq \lambda_0$ and $
\max\limits_{1\leq j \leq p}|\epsilon_1^T\bm{X}_1^{(j)}/n_1|\leq \lambda_0$. furthermore we can select $2\lambda_0\leq \lambda_1$

\begin{lemma} Suppose $\hat{\Sigma}_{j,j}=1$ for both $\bm{X}_1$ and $\bm{X}_2$ then we have for all $t>0$ and $n_1>n_2$
\begin{equation}
\lambda_0=2\sigma_2\sqrt{\frac{t^2+\log p}{n_2}}
\end{equation}
\begin{equation}
P(\mathcal{F})=1-2\exp(-t^2/2)
\end{equation}
\end{lemma}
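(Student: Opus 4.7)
The plan is to treat this as a standard Gaussian maximum concentration / union bound argument, exactly as in the classical analysis of the lasso empirical process term (cf. \citep{BhlmannBook}). The key structural observation is that once we condition on the design matrices $\bm{X}_1,\bm{X}_2$, each coordinate $\epsilon_k^T \bm{X}_k^{(j)}/n_k$ is a centered Gaussian whose variance is fully determined by the normalization of the columns.

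First, I would fix a dataset index $k\in\{1,2\}$ and a coordinate $j\in\{1,\ldots,p\}$. Since $\epsilon_k\sim N(0,\sigma_k^2 I_{n_k})$, the linear combination $\epsilon_k^T \bm{X}_k^{(j)}/n_k$ is Gaussian with mean zero and variance
\begin{equation*}
\sigma_k^2\,\|\bm{X}_k^{(j)}\|_2^2/n_k^2 \;=\; \sigma_k^2\,\hat{\Sigma}_{k,jj}/n_k \;=\; \sigma_k^2/n_k,
\end{equation*}
using the assumed column normalization $\hat{\Sigma}_{j,j}=1$. A standard one-dimensional Gaussian tail bound then gives
\begin{equation*}
P\!\left(\left|\epsilon_k^T \bm{X}_k^{(j)}/n_k\right|>\lambda_0\right)\;\leq\;2\exp\!\left(-\tfrac{\lambda_0^2\,n_k}{2\sigma_k^2}\right).
\end{equation*}

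Second, I would apply a union bound over $j=1,\ldots,p$ inside each dataset, and then over the two datasets, obtaining
\begin{equation*}
P(\mathcal{F}^c)\;\leq\;\sum_{k=1}^{2}2p\exp\!\left(-\tfrac{\lambda_0^2\,n_k}{2\sigma_k^2}\right).
\end{equation*}
The final step is to plug in $\lambda_0=2\sigma_2\sqrt{(t^2+\log p)/n_2}$. For the $k=2$ summand this yields exponent $\lambda_0^2 n_2/(2\sigma_2^2)=2(t^2+\log p)$, so the bound becomes $2p\cdot p^{-2}e^{-2t^2}\leq e^{-t^2/2}$ for $t$ and $p$ of the scales considered. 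For the $k=1$ summand, the assumption $n_1>n_2$ (and the implicit $\sigma_1\lesssim\sigma_2$, which is the natural setting of the paper) makes its exponent at least as large, so it is bounded by the same $e^{-t^2/2}$, giving $P(\mathcal{F}^c)\leq 2e^{-t^2/2}$ as claimed.

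The argument is essentially routine; the only delicate point is matching the advertised constants. Strictly speaking the union bound produces an extra $\log 2$ in the logarithm, and the formula's use of $\sigma_2$ rather than $\max(\sigma_1,\sigma_2)$ presumes that the larger-noise/smaller-sample dataset is the binding one. The main obstacle, therefore, is not conceptual but bookkeeping: verifying that the chosen $\lambda_0$ is large enough to absorb both the two-sided factor of $2$, the union bound over $p$ coordinates, and the union bound across the two datasets, under the regime $n_1>n_2$ that the paper imposes.
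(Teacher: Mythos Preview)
Your proposal is correct and follows essentially the same approach as the paper: the paper's proof consists of a one-line citation to \citep[Lemma~6.2]{BhlmannBook} together with the assumption $n_1>n_2$, and what you have written is precisely the Gaussian tail plus union-bound argument underlying that lemma. Your remarks about the constants and the implicit assumption $\sigma_1\lesssim\sigma_2$ are accurate caveats that the paper does not address explicitly.
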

\begin{proof}
This follows directly from the  \cite[Lemma 6.2]{BhlmannBook} and taking $n_1>n_2$.
\end{proof}
This allows us to get rid of the empirical process term on $\mathcal{F}$, with an appropriate choice of $\lambda_1$. 

Given a set, $S$, denote $\beta_S$ the vector of equal size to $\beta$ but all elements not in $S$ set to zero. We can now show the following 
\begin{lemma}
We have on $\mathcal{F}$ with $\lambda_1\geq 2\lambda_0$
\begin{gather}\nonumber
2\|\bm{X}_N(\hat{\beta}-\beta)\|_2^2+\lambda_1\|\hat{\beta}_{S_{1,2}^c}\|_1+2\lambda_2\|\hat{\beta}_{d,S_{d}^c}\|_1 
\\ \leq 3\lambda_1\|\hat{\beta}_{S_{1,2}}-\beta_{S_{1,2}}\|_1+2\lambda_2\|\hat{\beta}_{d,S_{d}}-\beta_{d,S_{d}}\|_1 
\end{gather}
\end{lemma}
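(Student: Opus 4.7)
The plan is to start from the Basic Inequality recorded above, use the high-probability control of the empirical process on $\mathcal{F}$, and then apply the standard lasso support-decomposition trick to the two $\ell_1$ pieces of the penalty. This should produce exactly a ``cone-style'' bound of the form claimed.

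First I would rewrite the Basic Inequality as
\begin{equation*}
\|\bm{X}_N(\hat\beta-\beta)\|_2^2 \leq 2\epsilon^T \bm{X}_N(\hat\beta - \beta) + \lambda_1(\|\beta\|_1 - \|\hat\beta\|_1) + \lambda_2(\|\beta_d\|_1 - \|\hat\beta_d\|_1).
\end{equation*}
On the event $\mathcal{F}$ the Hölder estimate already carried out in the text combined with the hypothesis $2\lambda_0 \leq \lambda_1$ yields $2\epsilon^T \bm{X}_N(\hat\beta-\beta) \leq 2\lambda_0 \|\hat\beta - \beta\|_1 \leq \lambda_1 \|\hat\beta - \beta\|_1$, which removes the stochastic term in favor of a deterministic $\ell_1$ quantity.

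Second, I would decompose every $\ell_1$ norm along its natural support. Because $\beta$ vanishes off $S_{1,2}$, one has $\|\hat\beta - \beta\|_1 = \|\hat\beta_{S_{1,2}} - \beta_{S_{1,2}}\|_1 + \|\hat\beta_{S_{1,2}^c}\|_1$ and $\|\hat\beta\|_1 = \|\hat\beta_{S_{1,2}}\|_1 + \|\hat\beta_{S_{1,2}^c}\|_1$, while the reverse triangle inequality gives $\|\beta\|_1 - \|\hat\beta_{S_{1,2}}\|_1 \leq \|\hat\beta_{S_{1,2}} - \beta_{S_{1,2}}\|_1$; together these produce $\lambda_1(\|\beta\|_1 - \|\hat\beta\|_1) \leq \lambda_1 \|\hat\beta_{S_{1,2}} - \beta_{S_{1,2}}\|_1 - \lambda_1 \|\hat\beta_{S_{1,2}^c}\|_1$. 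The same decomposition applied to $\hat\beta_d$ on the support $S_d$ (using that $\beta_d$ vanishes off $S_d$) gives $\lambda_2(\|\beta_d\|_1 - \|\hat\beta_d\|_1) \leq \lambda_2 \|\hat\beta_{d,S_d} - \beta_{d,S_d}\|_1 - \lambda_2 \|\hat\beta_{d,S_d^c}\|_1$.

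Third, I would substitute these three bounds into the rewritten Basic Inequality and multiply through by two so that the prefactor $2$ on the squared term and the coefficients $2\lambda_2$ appearing in the claim both materialize. The two negative ``off-support'' contributions $\lambda_2 \|\hat\beta_{d,S_d^c}\|_1$ and $\lambda_1 \|\hat\beta_{S_{1,2}^c}\|_1$ then get moved to the left-hand side; the copy of $\|\hat\beta_{S_{1,2}^c}\|_1$ coming from the expansion of $\|\hat\beta - \beta\|_1$ partially cancels with the one produced by the penalty gap, and one retains a single $\lambda_1 \|\hat\beta_{S_{1,2}^c}\|_1$ on the left, exactly as in the statement.

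The main obstacle is the bookkeeping in that last step: the asymmetric coefficients $(\lambda_1, 3\lambda_1)$ on the $\beta$-pieces versus the symmetric $(2\lambda_2, 2\lambda_2)$ on the $\beta_d$-pieces only emerge if one is careful about which copies of $\|\hat\beta_{S_{1,2}^c}\|_1$ are cancelled and which are kept. Everything else is routine: no new probabilistic ingredient beyond $\mathcal{F}$, and no new analytic tool beyond the (reverse) triangle inequality, is needed.
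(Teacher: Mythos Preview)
Your plan is exactly the paper's proof: multiply the Basic Inequality by two, replace the empirical process term on $\mathcal{F}$, apply the support decompositions \eqref{eq:ineq_b3}--\eqref{eq:ineq_b2}, and rearrange. No different ingredient is involved.

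One bookkeeping caveat. With the bound you wrote, $2\epsilon^T\bm{X}_N(\hat\beta-\beta)\le 2\lambda_0\|\hat\beta-\beta\|_1\le \lambda_1\|\hat\beta-\beta\|_1$, the $\lambda_1\|\hat\beta_{S_{1,2}^c}\|_1$ coming from the expansion of $\|\hat\beta-\beta\|_1$ cancels \emph{fully} (not partially) with the $-\lambda_1\|\hat\beta_{S_{1,2}^c}\|_1$ from the penalty gap, and after doubling you land on $4\lambda_1\|\hat\beta_{S_{1,2}}-\beta_{S_{1,2}}\|_1$ with no off-support term on the left. To obtain the exact constants $(\lambda_1,\,3\lambda_1)$ of the lemma you need the empirical process controlled by $(\lambda_1/2)\|\hat\beta-\beta\|_1$ before doubling; this is precisely what the paper's intermediate display \eqref{eq:basic_on_F} uses. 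It is a cosmetic factor-of-two issue in how $\lambda_0$ enters (and the paper's own hypothesis $\lambda_1\ge 2\lambda_0$ is equally loose here), not a flaw in the method.
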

\begin{proof}
Following \citep[Lemma 6.3]{BhlmannBook} we start with the basic inequality on $\mathcal{F}$. Which gives
\begin{gather}\nonumber
2\|\bm{X}_N(\hat{\beta}-\beta)\|_2^2+2\lambda_1\|\hat{\beta}\|_1+2\lambda_2\|\hat{\beta}_d\|\\\label{eq:basic_on_F}
\leq \lambda_1\|\hat{\beta}-\beta\|_1+2\lambda_1\|\beta\|_1+2\lambda_2\|\beta_d\|_1
\end{gather}

Since we assume the truth is in fact sparse,
\begin{gather}
\|\hat{\beta}_d-\beta_d\|_1=\|\hat{\beta}_{d,S_d}-\beta_{d,S_d}\|_1+\|\hat{\beta}_{d,S_d^C}\|_1\\\label{eq:ineq_b3}
\|\hat{\beta}-\beta\|_1=\|\hat{\beta}_{S_{1,2}}-\beta_{S_{1,2}}\|_1+\|\hat{\beta}_{S_{1,2}^C}\|_1
\end{gather}

Furthermore, 
\begin{gather}\label{eq:ineq_b}
\|\hat{\beta}\|_1 \geq \|\beta_{S_{1,2}}\|_1-\|\hat{\beta}_{S_{1,2}}-\beta_{S_{1,2}}\|_1+\|\hat{\beta}_{S_{1,2}^C}\|_1 \\\label{eq:ineq_b2}
\|\hat{\beta}_d\|_1 \geq \|\beta_{d,S_d}\|_1-\|\hat{\beta}_{d,S_d}-\beta_{d,S_d}\|_1+\|\hat{\beta}_{d,S_d^C}\|_1
\end{gather}
Substituting \eqref{eq:ineq_b}, \eqref{eq:ineq_b2}, and \eqref{eq:ineq_b3} into  \eqref{eq:basic_on_F} and rearranging completes the proof.
\end{proof}

From the lemma above we can now justify the bounds in \eqref{bound:u1u2}

\begin{proposition}
 Take $\lambda_1>2\sqrt{\frac{\log  p}{n_2}}$ and $\lambda_2=O(\lambda_1)$. Denote $s_d$ the difference sparsity, $s_{1,2}$ the parameter sparsity $|S_1|+|S_2|$, $c>1$,$a>1$, and $0<m\ll1$. When the compatibility condition \citep{BhlmannBook,ganguly2014local} holds the following bounds gives $l_au_2=o(1)$ and  $l_du_1=o(1)$ and thus $\|\Delta\|_{\infty}=o(1)$ with high probability.

\begin{equation}
\mu_1 \leq \frac{1}{c\lambda_2 s_d n_2^{m}} ~~\text{and}~~
\mu_2 \leq \frac{1}{a(\lambda_1 s_{1,2}+\lambda_2 s_{d}) n_2^{m}}
\end{equation}
\end{proposition}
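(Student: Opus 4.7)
From the bias decomposition in Section~\ref{sec:debdiff},
\[\|\Delta\|_\infty \leq \tfrac{1}{2}\mu_1 l_d + \tfrac{1}{2}\mu_2 l_a,\]
where $l_d=\|\hat{\beta}_d-\beta_d\|_1$ and $l_a=\|\hat{\beta}_a-\beta_a\|_1\leq\|\hat{\beta}-\beta\|_1$. The plan is to combine the cone-type inequality from the preceding Lemma with a compatibility-style oracle inequality adapted to the fused penalty to obtain, on the high-probability event $\mathcal{F}$, the two rates
\[l_a \leq C_a(\lambda_1 s_{1,2}+\lambda_2 s_d),\qquad l_d \leq C_d\,\lambda_2 s_d,\]
and then substitute the prescribed bounds on $\mu_1,\mu_2$ to conclude $\|\Delta\|_\infty=o(1)$ with probability at least $1-2e^{-t^2/2}$.

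\textbf{From cone inequality to $\ell_1$ rates.} The preceding Lemma shows that on $\mathcal{F}$ the two error vectors lie in a restricted cone where the off-support mass of $\hat\beta-\beta$ (resp.\ $\hat\beta_d-\beta_d$) is controlled by the on-support mass over $S_{1,2}$ (resp.\ $S_d$), weighted by $\lambda_1$ and $\lambda_2$. I would then invoke a joint compatibility condition on $\bm{X}_N$ for the pair of supports $(S_{1,2},S_d)$, which supplies a lower bound of the form $\|\bm{X}_N(\hat\beta-\beta)\|_2^2 \geq \phi_0^2 \|\hat\beta_{S_{1,2}}-\beta_{S_{1,2}}\|_1^2/s_{1,2}$ and an analogous bound for the difference block. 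Combining with the quadratic inequality already provided and splitting via $2ab\leq a^2+b^2$ absorbs the prediction-error term and isolates the claimed $\ell_1$ rates for $l_a$ and $l_d$.

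\textbf{Substitution.} Plugging $\mu_1\leq 1/(c\lambda_2 s_d n_2^{m})$ into the rate for $l_d$ yields $\mu_1 l_d\leq C_d/(c\,n_2^{m}) = o(1)$; plugging $\mu_2\leq 1/(a(\lambda_1 s_{1,2}+\lambda_2 s_d)n_2^{m})$ into the rate for $l_a$ yields $\mu_2 l_a\leq C_a/(a\,n_2^{m})=o(1)$. Summing the two contributions gives $\|\Delta\|_\infty = o(1)$ on $\mathcal{F}$. The choice $\lambda_1>2\sqrt{\log p/n_2}$ and $\lambda_2=O(\lambda_1)$ is precisely what ensures $2\lambda_0\leq\lambda_1$ in the previous probability Lemma, so that $\mathcal{F}$ has the stated high probability.

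\textbf{Main obstacle.} The principal difficulty is producing the refined oracle inequality that separates the two sparsity scales, in particular obtaining $l_d\lesssim\lambda_2 s_d$ rather than the looser $l_d\lesssim\lambda_1 s_{1,2}+\lambda_2 s_d$ that a naive application of standard lasso compatibility would deliver. The looser rate would force a much more stringent bound on $\mu_1$ than the Proposition advertises, defeating the point of exploiting difference-sparsity. Getting the sharper bound requires a joint/two-scale compatibility assumption (as in the ganguly-type reference cited), and careful bookkeeping of which cross-terms in the basic-inequality expansion are absorbed into the quadratic prediction error and which remain on the $\ell_1$ side. A secondary, more routine issue is verifying that the compatibility constants do not hide an $n$-dependence that would eat the $n_2^{m}$ safety margin; under the stated design assumptions they do not, but this needs to be checked explicitly.
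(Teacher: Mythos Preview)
Your proposal follows essentially the same route as the paper: start from the basic inequality and the cone Lemma, pass through compatibility to obtain $\ell_1$ oracle rates for $l_a$ and $l_d$, and then substitute the prescribed $\mu_1,\mu_2$ to kill both terms by the $n_2^{m}$ factor. Your bound on $l_a$ and the substitution step match the paper's argument closely.

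The one place where the paper differs from your sketch is precisely the obstacle you flag: obtaining $l_d\lesssim \lambda_2 s_d$ rather than $l_d\lesssim \lambda_1 s_{1,2}+\lambda_2 s_d$. You propose to resolve this via a genuine two-scale compatibility condition. The paper instead does something more heuristic: after writing
\[
\lambda_2\|\hat\beta_d-\beta_d\|_1 \leq 2\lambda_2\|\hat\beta_{d,S_d}-\beta_{d,S_d}\|_1 + \tfrac{3}{2}\lambda_1\|\hat\beta_{S_{1,2}}-\beta_{S_{1,2}}\|_1 - \|\bm{X}_N(\hat\beta-\beta)\|_2^2 - \tfrac{1}{2}\lambda_1\|\hat\beta_{S_{1,2}^c}\|_1,
\]
it argues that in the regime of interest ($n_1\gg n_2$, $\lambda_2=O(\lambda_1)$) the $S_{1,2}$ terms are dominated by the difference terms, and simply \emph{assumes} the inequality
\[
\tfrac{3}{2}\lambda_1\|\hat\beta_{S_{1,2}}-\beta_{S_{1,2}}\|_1-\tfrac{1}{2}\lambda_1\|\hat\beta_{S_{1,2}^c}\|_1 \leq c\,\lambda_2\|\hat\beta_{d,S_d}-\beta_{d,S_d}\|_1
\]
for some $c>0$, after which the standard compatibility-plus-quadratic argument gives $l_d\leq c^2\lambda_2 s_d/\phi_{\min}^2$. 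So the paper's ``sharp'' rate for $l_d$ is obtained by an asymptotic-regime domination assumption rather than by the formal two-scale compatibility condition you suggest; your proposed route would be the more rigorous of the two, at the cost of a stronger explicit hypothesis on the design.
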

\begin{proof}
We first consider the bound associated with $l_a$
\begin{gather}\nonumber
\lambda_1\|\hat{\beta}_a-\beta_a\|_1
\leq \lambda_1\|\hat{\beta}_{S_{1,2}}-\beta_{S_{1,2}}\|_1+\lambda_1\|\hat{\beta}_{S_{1,2}^c}\|_1 \leq  \\
4\lambda_1\|\hat{\beta}_{S_{1,2}}-\beta_{S_{1,2}}\|_1 +2\lambda_2\|\hat{\beta}_{S_{d}}-\beta_{S_{d}}\|_1-2\|\bm{X}_N(\hat{\beta}-\beta)\|_2^2\\\nonumber
\leq 4\lambda_1\sqrt{s_{1,2}}\|\hat{\beta}_{S_{1,2}}-\beta_{S_{1,2}}\|_2 +2\lambda_2\sqrt{s_d}\|\hat{\beta}_{S_{d}}-\beta_{S_{d}}\|_2\\-2\|\bm{X}_N(\hat{\beta}-\beta)\|_2^2
\end{gather}
Invoking the compatibility assumption \cite{BhlmannBook,javanmard2014confidence,ganguly2014local} with compatibility constant $\phi_{\min}$
\begin{gather}\nonumber
\leq \frac{4\lambda_1\sqrt{s_{1,2}}}{\phi_{min}}\|\bm{X}_N(\hat{\beta}-\beta)\|_2 +\frac{2\lambda_2\sqrt{s_d}}{\phi_{min}}\|\bm{X}_N(\hat{\beta}-\beta)\|_2\\-2\|\bm{X}_N(\hat{\beta}-\beta)\|_2^2\\\label{eqn:bound_a}
\leq \frac{4\lambda_1^2s_{1,2}}{\phi_{min}^2} +\frac{2\lambda_2^2 s_d}{\phi_{min}^2}
\end{gather}

The bound $u_2$ now follows by inverting the expression shown and adding a factor of $n_2^m$ where $m\ll 1$.

Now we consider the bound for $l_d$.
\begin{gather}
\lambda_2\|\hat{\beta}_d-\beta_d\|_1=\lambda_2\|\hat{\beta}_{d,S}-\beta_{d,S}\|_1+\lambda_2\|\hat{\beta}_{d,S^c}\|_1\\
\leq 2\lambda_2\|\hat{\beta}_{d,S}-\beta_{d,S}\|_1+ 3\lambda_1\|\hat{\beta}_{S_{1,2}}-\beta_{S_{1,2}}\|_1/2 \\-\|\bm{X}_N(\hat{\beta}-\beta)\|_2^2-\lambda_1\|\hat{\beta}_{S_{1,2}^c}\|_1/2 
\end{gather}

In the domain of interest $n_1 \gg n_2$ if we select $\lambda_2=O(\lambda_1)$ we can see the relevant terms related to the parameter support become small with respect to terms with $S_{1,2}$. Thus the error on the difference should dominate. In this region we can have $3\lambda_1\|\hat{\beta}_{S_{1,2}}-\beta_{S_{1,2}}\|_1/2-\lambda_1\|\hat{\beta}_{S_{1,2}^c}\|_1/2 \leq c\lambda_2\|\hat{\beta}_{d,S}-\beta_{d,S}\|_1$ where $c>0$.

\begin{gather}
\lambda_2\|\hat{\beta}_d-\beta_d\|_1
\leq 2\lambda_2\|\hat{\beta}_{d,S}-\beta_{d,S}\|_1-\|\bm{X}_N(\hat{\beta}-\beta)\|_2^2\\
\leq 2c\lambda_2\sqrt{s_d}\|\hat{\beta}_{d,S}-\beta_{d,S}\|_2-\|\bm{X}_N(\hat{\beta}-\beta)\|_2^2
\end{gather}

Invoking the compatibility assumption \cite{BhlmannBook}
\begin{gather}
\leq 2c\lambda_2\sqrt{s_d}\|\bm{X}_N(\hat{\beta}-\beta)\|_2/\phi_{min}-\|\bm{X}_N(\hat{\beta}-\beta)\|_2^2\\
\leq \frac{c^2\lambda_2^2 s_d}{\phi_{min}^2}+\|\bm{X}_N(\hat{\beta}-\beta)\|_2^2-\|\bm{X}_N(\hat{\beta}-\beta)\|_2^2
\end{gather}

Thus $\|\hat{\beta}_d-\beta_d\|_1 \leq \frac{c^2\lambda_2 s_d}{\phi_{min}^2}$ and use of the bound prescribed gives $l_du_1=o(1)$. 
 
\end{proof}
\section{Additional Experimental Details}
We show the corrected reproducibility results in Figure \ref{fig:extra}. For multiple testing correction in our experiments We use the Benjamin-Hochberg FDR procedure.
\begin{figure}
\centering
\includegraphics[scale=0.5]{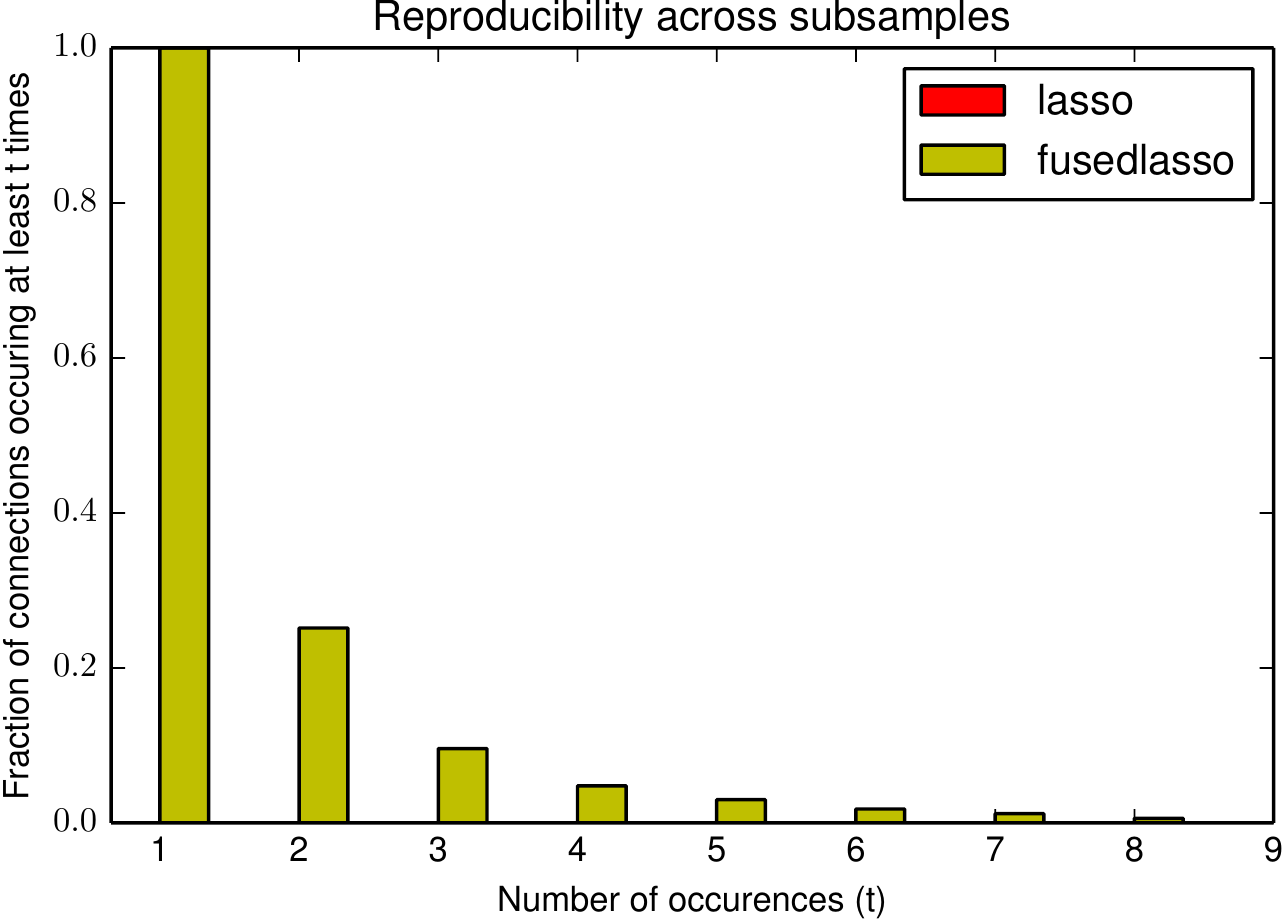}
\captionof{figure}{Reproducibility of results from sub-sampling using FDR of 5\% Reproducibility of results from subsampling, debiased lasso does not
produce any significant edge differences that correspond to a 5\%
error rate}\label{fig:extra}
\end{figure}

\end{document}